%% file: example_paper.tex
\documentclass{article}

\usepackage{microtype}
\usepackage{graphicx}
\usepackage{subcaption}
\usepackage{booktabs} 

\usepackage{wrapfig} 
\usepackage{capt-of}       
\usepackage{etoolbox}      
\usepackage{multirow}
\usepackage{multicol}
\usepackage{amssymb}
\usepackage{bbding}
\usepackage{pifont}
\usepackage[table]{xcolor}
\usepackage{hyperref}

\usepackage[accepted]{icml2026}

\usepackage{amsmath}
\usepackage{amssymb}
\usepackage{mathtools}
\usepackage{amsthm}

\usepackage[capitalize,noabbrev]{cleveref}

\theoremstyle{plain}
\newtheorem{theorem}{Theorem}[section]

\newtheorem{lemma}[theorem]{Lemma}

\theoremstyle{definition}

\theoremstyle{remark}

\usepackage[disable,textsize=tiny]{todonotes}

\input{math_commands.tex}

\icmltitlerunning{LAST: Bridging Vision-Language and Action Manifolds via Gromov-Wasserstein Alignment}

\begin{document}

\twocolumn[
  \icmltitle{LAST: Bridging Vision-Language and Action Manifolds via Gromov-Wasserstein Alignment}



  \icmlsetsymbol{equal}{*}

  \begin{icmlauthorlist}
    \icmlauthor{Huaihai Lyu}{cas}
    \icmlauthor{Chaofan Chen}{cas}
    \icmlauthor{Yuheng Ji}{cas}
    \icmlauthor{Xiansheng Chen}{baai}
    \icmlauthor{Pengwei Wang}{baai}
    \icmlauthor{Shanghang Zhang}{pku}
    \icmlauthor{Changsheng Xu}{cas}
  \end{icmlauthorlist}

  \icmlaffiliation{cas}{MAIS, Institute of Automation, Chinese Academy of Sciences.}
  \icmlaffiliation{baai}{Beijing Academy of Artificial Intelligence.}
  \icmlaffiliation{pku}{Peking University.}

  \icmlcorrespondingauthor{Chaofan Chen}{chencfbupt@gmail.com}
  \icmlcorrespondingauthor{Shanghang Zhang}{shanghang@pku.edu.cn}

  \icmlkeywords{Action Manifold Learning, Representation Learning.}

  \vskip 0.3in
]



\printAffiliationsAndNotice{}  

\input{sec/0_abs}
\input{sec/1_intro}
\input{sec/2_rw}
\input{sec/3_method}

\input{sec/4_exp}
\input{sec/5_conclusion}

\section*{Impact Statement}
This paper advances generalist embodied intelligence via geometry-aware alignment between vision-language and action manifolds. We highlight two analytical considerations that bound the applicability of our framework.
\textit{(i) Whitening complexity.} Local Metric Discretization estimates per-schema covariances at $\mathcal{O}(K d^3)$ cost ($K$: codebook size, $d$: residual dimensionality), negligible at our scale ($K{=}1024$, $d{=}6$); high-DoF embodiments such as dexterous multi-finger control may require low-rank or shrinkage estimators to avoid instability from under-sampled covariance modes.
\textit{(ii) Local approximation validity.} Global Topological Linearization relies on the $SE(3)$ logarithm map, whose injectivity degrades near the cut locus (rotations approaching $\pi$). At $\geq 30$~Hz sampling, inter-frame residuals stay within the injectivity radius and linearization is essentially distortion-free; sparsely sampled or aggressively rotating trajectories may need multi-chart covers or recursive linearization.

\section*{Acknowledgements}
This work was supported by the National Natural Science Foundation of China under Grants U23A20387, 62502518, 62532003, U25A20536, in part by the Pengcheng Laboratory Research Project under Grant PCL2023A08, and also in part by the Postdoctoral Fellowship Program of CPSF under Grant Number GZC20251036.

\bibliography{example_paper}
\bibliographystyle{icml2026}

\newpage
\appendix
\onecolumn
\input{sec/6_appendix}



\end{document}

%% file: math_commands.tex
\usepackage{amsmath,amsfonts,bm}

\def\eqref#1{equation~\ref{#1}}

\def\1{\bm{1}}

\def\va{{\bm{a}}}

\def\vc{{\bm{c}}}

\def\vl{{\bm{l}}}

\def\vo{{\bm{o}}}

\def\vq{{\bm{q}}}
\def\vr{{\bm{r}}}
\def\vs{{\bm{s}}}

\def\vu{{\bm{u}}}
\def\vv{{\bm{v}}}

\def\vz{{\bm{z}}}

\DeclareMathAlphabet{\mathsfit}{\encodingdefault}{\sfdefault}{m}{sl}
\SetMathAlphabet{\mathsfit}{bold}{\encodingdefault}{\sfdefault}{bx}{n}

\newcommand{\method}{\textbf{LAST}}
\DeclareMathOperator{\RVQ}{RVQ}
\DeclareMathOperator{\sg}{sg}
\DeclareMathOperator{\Softmax}{Softmax}
\DeclareMathOperator{\Denoise}{Denoise}
\DeclareMathOperator{\TaskCompleted}{TaskCompleted}

%% file: sec/0_abs.tex
\begin{abstract}
We take a Gromov-Wasserstein perspective on Vision-Language-Action (VLA) learning, where the goal is to make the relational geometry of action representations compatible with the semantic geometry of VL embeddings. However, this alignment is non-trivial due to the mathematical heterogeneity between the domains: the semantic space of vision-language is topologically linear and isotropic, whereas the physical manifold of robotic action is non-Euclidean and anisotropic. Their disjoint metric structures render direct regression ill-posed.
To resolve this incompatibility, we introduce \method{} (\textbf{L}ie-algebraic \textbf{A}ction \textbf{S}pace \textbf{T}okenizer), which reconstructs the action space to establish local metric compatibility with the VL modality via a two-stage transformation:
(1) \textit{Global Topological Linearization}: linearizing the action manifold via Lie-algebraic mapping, converting trajectories into a fixed-length, physically additive representation.
(2) \textit{Local Metric Discretization}: hierarchically discretizing the representation into schemas and whitened residuals, yielding approximately isotropic local charts that are statistically aligned with the semantic metric.
By resolving the structural mismatch at both global and local levels, \method{} enables VLA models with superior convergence and generalizability.
\end{abstract}

%% file: sec/1_intro.tex
\section{Introduction}

\begin{figure}[t]
    \centering
    \includegraphics[width=0.98\linewidth]{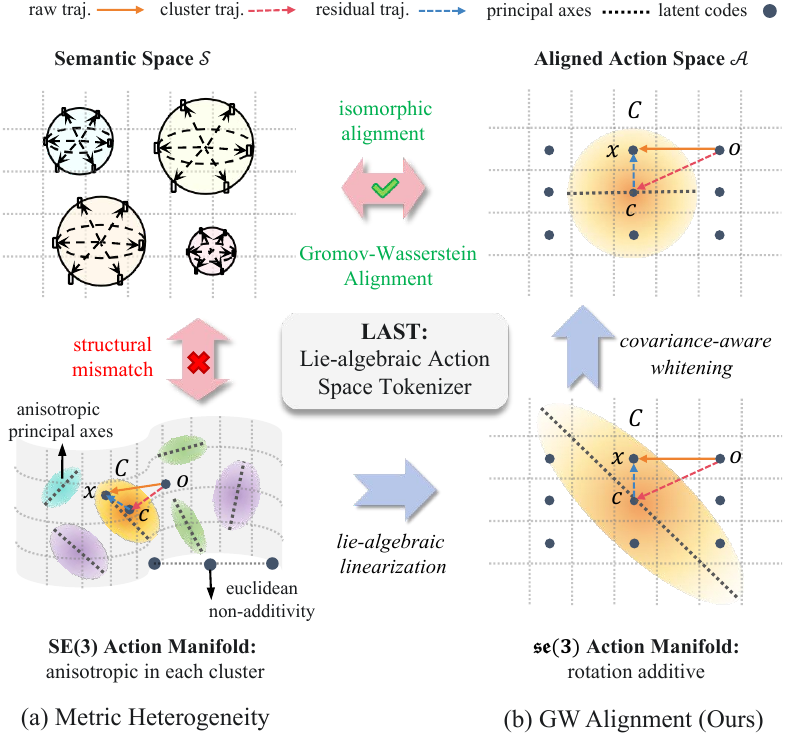}
    \caption{\textbf{Motivation of \method{}.}
    (a) Semantic embeddings exhibit near-isotropic local neighborhoods under normalized cosine geometry (circles), whereas action modes are anisotropic (ellipses) and $SE(3)$ composition is non-additive.
    (b) \method{} maps actions to the tangent space $\mathfrak{se}(3)$ for locally additive residuals, then performs covariance-aware whitening to reduce anisotropy.}

    \label{fig:teaser}
    \vspace{-15pt}
\end{figure}

The pursuit of generalist embodied intelligence has been accelerated by the rise of VLA models~\citep{ji2025robobrainunifiedbrainmodel,black2024pi_0}. By treating robot actions as tokens, these models aim to inherit the scaling laws of Transformers~\citep{sorscher2022beyond}. However, a fundamental theoretical barrier persists: \textit{structural heterogeneity across VL and A modalities}~\citep{rioux2024entropic}.
To bridge this gap, we adopt a view of Gromov-Wasserstein (GW) alignment~\citep{xu2019gromov, peyre2019computational} that compares relational geometry within each modality rather than pointwise coordinates.

Prior works~\citep{saveriano2023movement, florence2022ibc} indicate that complex behaviors decompose into a small set of recurring primitives, yielding a multi-cluster structure (\emph{e.g.}, the elliptical clusters in Figure~\ref{fig:teaser}(a)).
When viewed in local coordinates, each elliptical cluster exhibits task-aligned anisotropy, where the principal axis concentrates variation along feasible dynamical directions~\citep{paraschos2013promp}.
We therefore treat each cluster as a \textit{prototypical action pattern}: $o$ is the coordinate origin, $c$ is a representative prototype centroid, and $x$ is an exact execution trajectory in cluster $C$.
This organization exposes a metric incompatibility between semantic embeddings and physical actions, resulting in two gaps that hinder cross-modal alignment.
\textbf{\textit{(1) Geometric Mismatch.}}
Standard regression measures prototype deviation with a Euclidean residual, $\|\vec{ox}-\vec{oc}\|_2^2$, implicitly treating the action space as a convex vector space where additive updates remain feasible (\emph{i.e.}, $\vec{ox}\approx \vec{oc}+\vec{cx}$).
However, robotic actions lie on the Lie group $SE(3)$, which is non-convex and not closed under Euclidean addition~\citep{barfoot2024state}.
Consequently, Euclidean residuals and linear mixing can produce geometrically invalid rotations and mismeasure physical discrepancy~\citep{zhou2019continuity}, causing off-manifold mixing and cross-modal misalignment.
\textbf{\textit{(2) Statistical Mismatch.}}
VLM embeddings are trained contrastively and compared via cosine similarity on $\ell_2$-normalized features~\citep{radford2021clip}, encouraging a locally isotropic hyperspherical geometry~\citep{wang2020hypersphere}.
In contrast, action patterns are strongly anisotropic along dynamics-constrained directions~\citep{sola2018micro}. As a result, using the isotropic VL metric to retrieve action neighbors selects false neighbors across the ellipse’s short axis while missing true neighbors along the principal axis. This neighbor drift breaks local metric consistency and destabilizes training.

To address these challenges, we take a Gromov-Wasserstein perspective on VLA learning, where the goal is to make the relational geometry of action representations compatible with the semantic geometry of VL embeddings. This GW-inspired structural alignment matches the intra-domain geometries rather than enforcing pointwise correspondence. To realize this view, we present \method{} (\textbf{L}ie-algebraic \textbf{A}ction \textbf{S}pace \textbf{T}okenizer), which encodes manifold-valued actions to facilitate alignment with the semantic space. This structural alignment is realized through a dual-stage transformation:
(1) \textbf{Global Topological Linearization}: We linearize the non-Euclidean action manifold by mapping trajectories into the Lie-algebraic tangent space $\mathfrak{se}(3)$ and parameterizing them via B-splines. This converts variable-length, curved trajectories into a fixed-length, physically additive vector representation, resolving the Euclidean non-additivity shown in Figure~\ref{fig:teaser}(a).
(2) \textbf{Local Metric Discretization}: We hierarchically decompose the representation into discrete topological schemas and fine-grained residuals. Crucially, we employ covariance-aware whitening to rectify the statistical mismatch. As depicted in Figure~\ref{fig:teaser}(b), this transformation regularizes the skewed residual distributions toward approximately isotropic local charts, yielding an action space whose local geometry is statistically compatible with that of vision-language embeddings.

Building upon this foundation, we integrate \method{} into a unified VLA training framework to enforce structural alignment between the semantic and action spaces.
We first pre-train \method{} to encode continuous trajectories into a compact discrete space via Global Topological Linearization, then apply Local Metric Discretization to hierarchically cluster motion patterns into discrete action tokens. 
During VLA training, we use two complementary supervisions. 
The backbone’s hidden states are supervised in two ways: an AR head predicts \method{} quantized tokens, shaping structured action intents, while a diffusion head is conditioned on the same hidden features to match continuous actions. Together, AR provides a stable discrete intent and diffusion refines continuous control details, improving control precision and generalization.

In summary, our contributions are threefold:
\begin{itemize}
\setlength{\topsep}{0pt}        
\setlength{\partopsep}{0pt}     
\setlength{\itemsep}{2pt}       
\setlength{\parsep}{0pt}        
\setlength{\parskip}{0pt}       
    \item We adopt a Gromov-Wasserstein perspective on action tokenization, identifying metric heterogeneity as the root cause of poor VLA generalization.
    \item We introduce \method{}, a tokenizer that establishes local metric compatibility between action and semantic spaces via Lie-algebraic linearization and statistical whitening.
    \item Experiments on diverse simulation and real-world benchmarks demonstrate strong performance and generalization.
\end{itemize}

%% file: sec/2_rw.tex
\section{Related Work}

\subsection{Vision-Language-Action Models and Tokenization}
Modern VLA policies \citep{kim2024openvla, RT-X} typically discretize actions to leverage the scalability of autoregressive Transformers. However, current tokenization often lacks geometric and statistical validity~\citep{zhou2019continuity}. Simple binning~\citep{kim2024openvla} ignores the non-linear $SE(3)$ manifold, inducing kinematic distortions. Structured VQ-based methods \citep{lee2024behavior} rely on Euclidean metrics that assume isotropic distributions, creating a statistical mismatch with the highly anisotropic nature of robotic motion. Furthermore, frequency-domain transforms like FAST \citep{pertsch2025fast} can introduce non-physical artifacts (\emph{e.g.,} Gibbs ringing). These geometry-agnostic representations force models to overfit to artifacts rather than robust physical priors, limiting VLA generalization.

\subsection{Geometric Alignment and Manifold Learning}
Our work is grounded in geometric deep learning~\citep{bronstein2021geometric} and the theoretical analysis of multi-modal alignment~\citep{liang2022mind}. Recent studies~\citep{mistretta2025crossgapexposingintramodal} have identified a fundamental ``Modality Gap'' between visual-semantic embeddings and other heterogeneous geometric spaces. Direct regression between such disparate manifolds is often mathematically ill-posed without explicit structural alignment~\citep{xu2019gromov}, frequently requiring Optimal Transport techniques to match intra-domain relational geometries. Moreover, generative models on non-Euclidean data suffer from ``Latent Space Oddity''~\citep{arvanitidis2017latent}, where the metric mismatch between flat priors and curved manifolds leads to severe distortion. 

%% file: sec/3_method.tex
\section{Methodology}
\label{sec:method}
\subsection{Preliminaries}  
\label{sec:preliminaries}
\textbf{VLA Learning Paradigm.}
A VLA model learns a policy $\pi_{\theta}$ that generates action signals conditioned on visual observations $\vo$ and language instructions $\vl$. 
For notation simplicity, we introduce state $\vs = [\vo, \vl]$.
To capture temporal dependencies, the prediction target is typically an action chunk $\va \in \mathbb{R}^{T \times d}$, where $T$ denotes the action horizon and $d$ is the action dimensionality.
The VLA is trained to minimize the negative log-likelihood:
\begin{equation}
    \min_{\theta}\mathbb{E}_{(\va,\vs)\sim\mathcal{D}}\big[-\log \pi_{\theta}(\va \mid \vs)\big].
\end{equation}

\textbf{Residual Vector Quantization.}
Since standard Auto-Regressive (AR) backbones operate on discrete tokens, the continuous action chunk $\va$ must be discretized. To achieve high-fidelity compression, RVQ~\citep{lee2024behavior} is a standard approach. It approximates a target vector $\vz$ through a coarse-to-fine summation of $L$ discrete codewords:
\begin{equation}
\label{eq:rvq_sum}
    \hat{\vz} = \RVQ(\vz) = \sum_{l=1}^{L} \vc_{k_l}^{(l)}, \quad \text{where } \vc^{(l)} \in \mathcal{C}^{(l)}.
\end{equation}
Here, $\mathcal{C}^{(l)}$ denotes the codebook at layer $l$, and $\vc_{k_l}^{(l)}$ is the codebook vector retrieved from $\mathcal{C}^{(l)}$ at index $k_l$.

\subsection{Theoretical Analysis: A Gromov-Wasserstein Perspective}
\label{sec:theory}
We take a Gromov-Wasserstein (GW) perspective on generalist VLA policy learning, viewing the policy as inducing a mapping $\Phi: \mathcal{X} \to \mathcal{Y}$ between two heterogeneous metric-measure spaces~\citep{xu2019gromov}. The semantic space $\mathcal{X} = (\mathbb{R}^{d_s}, d_\mathcal{X}, \mu)$ comprises feature vectors in $\mathbb{R}^{d_s}$ distributed according to probability measure $\mu$ (\emph{i.e.,} data distribution of VL embeddings), equipped with the Euclidean metric $d_\mathcal{X}$ arising from the statistically isotropic tensor $G_\mathcal{X} \approx I$.
The action manifold $\mathcal{Y} = (\mathcal{M}, d_\mathcal{Y}, \nu)$ represents the distribution $\nu$ of valid trajectories on the sub-manifold $\mathcal{M} \subset SE(3)^{T}$, where $T$ denotes the action horizon. The intrinsic geodesic metric $d_\mathcal{Y}$ is governed by the highly anisotropic Riemannian tensor $G_\mathcal{Y} \not\approx I$. Rather than explicitly optimizing a GW objective, we leverage this perspective to design a tokenizer that makes the intra-domain relational geometries of $\mathcal{X}$ and $\mathcal{Y}$ structurally compatible.

Standard approaches typically employ direct regression to approximate $\Phi$.
However, it is ill-posed due to (i) \textit{geometric mismatch:} assuming a closed, additive Euclidean action space while $\mathcal{Y}$ is multi-cluster and non-additive; and (ii) \textit{statistical mismatch:} assuming isotropy while $\mathcal{Y}$ is highly anisotropic, leading to ill-conditioned and unstable optimization.
We formalize these failure modes in Lemma~\ref{lemma:mode-averaging} and Lemma~\ref{lemma:statistical-mismatch}.
Motivated by this analysis, we elaborate on how our design rectifies the action manifold.

\subsubsection{Manifold Rectification}
To achieve structural alignment, we rectify the mismatched action manifold along two axes.
For geometric alignment, we first recover approximate intra-cluster additivity by linearizing actions in Lie-algebra tangent charts (see Lemma~\ref{lemma:intra-cluster-additivity} for details due to page limit). We then enforce local closure on the multi-cluster manifold by introducing discrete action intents (Lemma~\ref{lemma:local_closure}).
For statistical alignment, we normalize local anisotropy via covariance-aware whitening to improve conditioning and stabilize GW alignment (Lemma~\ref{lemma:whiten_iso}).

\begin{lemma}[Local Closure via Discrete Action Intents]
\label{lemma:local_closure}
Assume a multi-cluster action distribution modeled as a Gaussian mixture
$P(\va|\vs)=\sum_{k=1}^K \omega_k(\vs) \mathcal{N}(\va;\bm{\mu}_k,\bm{\Sigma}_k)$,
where the component means $\bm{\mu}_k \in \mathcal{M}$ represent distinct feasible action prototypes.
Introducing a discrete intent variable $z \in \mathcal{Z} = \{z_k\}_{k=1}^K$ yields the locally closed factorization
$P(\va|\vs) = \sum_{k=1}^K P(\va|z=k,\vs)\, P(z=k|\vs)$,
which restricts each conditional transport to a corresponding feasible action chart in $\mathcal{Y}$.
\end{lemma}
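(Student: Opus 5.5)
The argument is a latent-variable construction for the mixture, followed by a concentration argument that makes ``local closure'' precise. First, build the discrete intent explicitly. Given the mixture weights, $\omega_k(\vs)\ge 0$ and $\sum_k \omega_k(\vs)=1$, so we may define a joint law on $(\va,z)$ by
\[
P(z=k\mid\vs)=\omega_k(\vs), \qquad P(\va\mid z=k,\vs)=\mathcal{N}(\va;\bm{\mu}_k,\bm{\Sigma}_k).
\]
Marginalizing over $z$ with the law of total probability recovers $\sum_k \omega_k(\vs)\mathcal{N}(\va;\bm{\mu}_k,\bm{\Sigma}_k)=P(\va\mid\vs)$. This gives the stated factorization directly, so the algebraic part of the lemma is immediate. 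I would also note that the posterior $P(z=k\mid\va,\vs)\propto \omega_k(\vs)\mathcal{N}(\va;\bm{\mu}_k,\bm{\Sigma}_k)$ is well defined. This shows the intent is identifiable from data up to label permutation, which matters because an RVQ first-layer codebook approximates this assignment.

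Second, I would make ``restricts each conditional transport to a feasible chart'' precise. For each $k$, take the chart $U_k$ to be a neighborhood of $\bm{\mu}_k\in\mathcal{M}$ on which the $SE(3)$ logarithm map centered at $\bm{\mu}_k$ is a diffeomorphism, i.e. the injectivity radius condition already invoked in the Impact Statement. The conditional $P(\va\mid z=k,\vs)$ is a single Gaussian. Its mass outside the Mahalanobis ellipsoid $\{(\va-\bm{\mu}_k)^\top\bm{\Sigma}_k^{-1}(\va-\bm{\mu}_k)\le r^2\}$ is bounded by a $\chi^2_d$ tail, which decays like $e^{-r^2/2}$ up to polynomial factors. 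Hence if the covariances are small relative to the chart radius, i.e. $\lambda_{\max}(\bm{\Sigma}_k)^{1/2} r\le \mathrm{inj}(\bm{\mu}_k)$, each conditional is supported in $U_k$ up to an exponentially small mass $\epsilon$. Within $U_k$, Lemma~\ref{lemma:intra-cluster-additivity} then supplies approximate additivity, since residuals $\log(\bm{\mu}_k^{-1}\va)$ compose linearly. I read closure as follows: conditional means, mixtures and interpolations taken within one component stay in $U_k\subset\mathcal{M}$, because the chart is convex in log-coordinates. This contrasts with the unconditional mean $\sum_k\omega_k\bm{\mu}_k$, which Lemma~\ref{lemma:mode-averaging} shows can leave $\mathcal{M}$.

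Third, I would state the conclusion as a transport restriction. Any coupling $\pi$ between $\mu$ and $\nu$ decomposes as $\pi=\sum_k \omega_k\,\pi_k$, where $\pi_k$ couples the semantic mass assigned to intent $k$ with $\nu_k=P(\cdot\mid z=k,\vs)$. Each $\pi_k$ therefore transports into $U_k$ up to mass $\epsilon$.

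The main obstacle is this chart step. The lemma is informal, because a Gaussian has full support and cannot literally lie in a chart. I would handle this by working with a truncated Gaussian on $U_k$, or by explicitly stating the approximate, $\epsilon$-level version under a separation/small-covariance assumption. I would also interpret ``Gaussian on $\mathcal{M}$'' as Gaussian in the tangent coordinates at $\bm{\mu}_k$, which keeps everything consistent with the Lie-algebraic linearization.
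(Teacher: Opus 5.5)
Your factorization step is essentially the paper's: its proof also introduces $z$ and applies the chain rule $P(\va,z\mid\vs)=P(\va\mid z,\vs)P(z\mid\vs)$, with $P(z=k\mid\vs)$ playing the role of $\omega_k(\vs)$. The two arguments diverge on the ``feasible chart'' clause. The paper argues at the level of the learning problem. Mode selection is a cross-entropy classification onto semantic Voronoi cells $\mathcal{X}_k$. Given $z=k$, the energy $\|\va-\bm{\mu}_k\|^2_{\bm{\Sigma}_k^{-1}}$ is a convex quadratic, and from this the paper simply asserts $\Phi\approx\sum_k\Phi_k$, with each $\Phi_k$ landing in a geodesically convex chart $\mathcal{Y}_k$. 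You argue measure-theoretically instead: $\chi^2$ concentration of each component inside an injectivity chart under a small-covariance hypothesis, convexity of that chart in log coordinates, and a splitting of the coupling. Your version gives an explicit $\epsilon$-level statement and honestly handles the full support of a Gaussian, which the paper glosses over. The paper's version instead links the factorization to optimization, i.e.\ why an intent classifier plus a local refinement head is easier to train. The lemma as stated does not strictly need that link.

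One inclusion is asserted rather than derived: $U_k\subset\mathcal{M}$. Injectivity of $\log$ is a property of the group $SE(3)^T$, not of the feasible set, and the hypotheses give only $\bm{\mu}_k\in\mathcal{M}$. Since $\mathcal{M}$ is non-convex with barriers (Lemma~\ref{lemma:mode-averaging}), an infeasible region may lie arbitrarily close to $\bm{\mu}_k$. So ``feasible'' is exactly the part your argument leaves unproved. You need an extra hypothesis, e.g.\ that the log-coordinate ball of radius $\rho_k>0$ about $\bm{\mu}_k$ lies in $\mathcal{M}$, together with $\lambda_{\max}(\bm{\Sigma}_k)^{1/2}r\le\min\{\mathrm{inj}(\bm{\mu}_k),\rho_k\}$. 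This requires $\mathcal{M}$ to be full-dimensional near $\bm{\mu}_k$, as in the barrier picture, rather than a lower-dimensional submanifold. The paper's unproved ``geodesically convex'' charts rest on the same assumption.

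Two smaller corrections. In your third step, only the data-induced coupling $\mu(d\vs)P(\va\mid\vs)$ splits with first marginals $\omega_k(\vs)\mu(d\vs)/\bar\omega_k$. The mixture weights there are $\bar\omega_k=\int\omega_k\,d\mu$, not the $\vs$-dependent $\omega_k$. A general coupling can still be split through the posterior $P(z=k\mid\va)$; this gives second marginals $\nu_k$, which suffices for your conclusion. Also, a well-defined posterior does not by itself give identifiability. That needs the classical identifiability of finite Gaussian mixtures with distinct parameters, although the lemma never uses it.
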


\textbf{Proof.}
By introducing the discrete variable $z$, we factorize the joint distribution via the chain rule: 
\begin{equation}
    P(\va, z | \vs) = P(\va | z, \vs) P(z | \vs).
\end{equation}
This decomposes the optimization into two tractable sub-problems:
\textit{(i) Mode Selection:} The term $P(z=k | \vs)$ learns to approximate the mixing weights $\omega_k(\vs)$, which is a discrete classification problem with a convex cross-entropy loss. This maps $\vs$ to the correct Voronoi cell $\mathcal{X}_k$ (the region assigned to $z=k$).
\textit{(ii) Mode Refinement:} Conditioned on a specific token $z=k$, the target distribution becomes:
\begin{equation}
    P(\va | z=k, \vs) \approx \mathcal{N}(\va; \bm{\mu}_k, \bm{\Sigma}_k).
\end{equation}
The corresponding energy function becomes \textit{locally convex}:
\begin{equation}
    \mathcal{E}_k(\va) = -\log P(\va | z=k) \propto \|\va - \bm{\mu}_k\|_{\bm{\Sigma}_k^{-1}}^2.
\end{equation}
Thus, the global transport plan $\Phi$ decomposes into a mixture of local plans: $\Phi \approx \sum_k \Phi_k$, where each local plan $\Phi_k$ maps a semantic cluster $\mathcal{X}_k$ to a geodesically convex action chart $\mathcal{Y}_k$, ensuring consistency within each domain.
Consequently, the structural alignment problem decomposes into a mixture of low-distortion local couplings, significantly easing the cross-modal matching.
\qed

\begin{lemma}[Statistical Alignment via Whitened Isomorphism]
\label{lemma:whiten_iso}
    Given actions in chart $z_k$ with local statistics $(\bm{\mu}_k,\bm{\Sigma}_k)$, applying a Covariance-Aware Whitening transformation $\psi_k(\va) = \bm{\Sigma}_k^{-1/2}(\va - \bm{\mu}_k)$ within a local chart $z_k$ aligns the anisotropic physical metric $G_\mathcal{Y}$ with the isotropic semantic metric $G_\mathcal{X}$, satisfying the Bi-Lipschitz continuity condition for the decoding function.
\end{lemma}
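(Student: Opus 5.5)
We make the statement precise in the only way the lemma's setting permits. Within chart $z_k$ we model the local physical metric by the Mahalanobis form induced by the component covariance, so that $d_\mathcal{Y}(\va,\va')^2 \approx (\va-\va')^\top \bm{\Sigma}_k^{-1}(\va-\va')$, i.e.\ $G_\mathcal{Y}\big|_{z_k} \approx \bm{\Sigma}_k^{-1}$. This is the same quadratic form that appears in the local energy $\mathcal{E}_k$ of Lemma~\ref{lemma:local_closure}. The semantic metric is $G_\mathcal{X}\approx I$.

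\textbf{Pullback computation.} The whitening map $\psi_k(\va)=\bm{\Sigma}_k^{-1/2}(\va-\bm{\mu}_k)$ is affine with constant Jacobian $J=\bm{\Sigma}_k^{-1/2}$. Pushing forward, the metric in whitened coordinates $\vu=\psi_k(\va)$ is
\begin{equation*}
J^{-\top} G_\mathcal{Y} J^{-1} = \bm{\Sigma}_k^{1/2}\bm{\Sigma}_k^{-1}\bm{\Sigma}_k^{1/2} = I .
\end{equation*}
Equivalently, $\|\psi_k(\va)-\psi_k(\va')\|_2 = \|\va-\va'\|_{\bm{\Sigma}_k^{-1}}$, so $\psi_k$ is an isometry from the chart (with its local metric) onto a Euclidean ball. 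We also check the statistical side: if $\va\mid z=k \sim \mathcal{N}(\bm{\mu}_k,\bm{\Sigma}_k)$, then $\psi_k(\va)\sim\mathcal{N}(0,I)$, whose second-moment tensor is the identity and is therefore isotropic like $G_\mathcal{X}$. This is the ``alignment'' claim.

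\textbf{Bi-Lipschitz bound for the decoder.} The decoder is $\psi_k^{-1}(\vu)=\bm{\Sigma}_k^{1/2}\vu+\bm{\mu}_k$. In the Euclidean ambient norm we get
\begin{equation*}
\sqrt{\lambda_{\min}(\bm{\Sigma}_k)}\,\|\vu-\vu'\| \le \|\psi_k^{-1}(\vu)-\psi_k^{-1}(\vu')\| \le \sqrt{\lambda_{\max}(\bm{\Sigma}_k)}\,\|\vu-\vu'\| ,
\end{equation*}
so the decoder is bi-Lipschitz with constants set by the extreme eigenvalues, provided $\bm{\Sigma}_k\succ 0$. With respect to $d_\mathcal{Y}$ itself, both constants equal $1$.

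\textbf{Main obstacle.} The hard step is justifying $G_\mathcal{Y}\approx\bm{\Sigma}_k^{-1}$ for the true geodesic metric on $\mathcal{M}\subset SE(3)^T$ rather than on its linearization. Our plan is to work in the Lie-algebra tangent chart of Lemma~\ref{lemma:intra-cluster-additivity}. There the logarithm map is a local diffeomorphism whose differential is $I+O(\|\xi\|)$ inside the injectivity radius. Composing with $\psi_k$ then degrades the Lipschitz constants only by factors $1\pm O(r_k)$, where $r_k$ is the chart radius. We will therefore state the result with these $(1+O(r_k))$-perturbed constants.

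We also need to ensure $\bm{\Sigma}_k$ is nonsingular. For this we assume estimated covariances are regularized ($\bm{\Sigma}_k+\epsilon I$), which keeps $\lambda_{\min}$ bounded away from zero and makes the bi-Lipschitz constants finite.
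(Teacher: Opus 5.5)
Your pullback computation $\bm{\Sigma}_k^{1/2}\bm{\Sigma}_k^{-1}\bm{\Sigma}_k^{1/2}=I$ matches the first half of the paper's proof, as does your observation that $\psi_k$ sends $\mathcal{N}(\bm{\mu}_k,\bm{\Sigma}_k)$ to $\mathcal{N}(0,I)$. The gap is that you prove the bi-Lipschitz property for the wrong map. In the paper the ``decoding function'' is the policy-induced cross-modal map $\Phi:\mathcal{X}\to\mathcal{Y}$ restricted to a chart. The claim is that $\psi_k\circ\Phi$ is bi-Lipschitz with constant $C\approx 1$, i.e.\ $\|\psi_k(\Phi(x_1))-\psi_k(\Phi(x_2))\|_2\approx\|x_1-x_2\|_2$. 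This contrasts with Lemma~\ref{lemma:statistical-mismatch}, where $\mathbf{J}_\Phi\mathbf{J}_\Phi^\top\approx\bm{\Sigma}_\mathcal{Y}$ forces distortion of order $\sqrt{\kappa}$.

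Your bound concerns $\psi_k^{-1}$ and never involves $\Phi$ or $\mathcal{X}$. Every invertible affine map is bi-Lipschitz, so your ambient inequality holds with or without whitening. Its distortion ratio $\sqrt{\lambda_{\max}(\bm{\Sigma}_k)/\lambda_{\min}(\bm{\Sigma}_k)}$ is exactly the anisotropy the lemma is supposed to remove. Your version with constants $1$ relative to $d_\mathcal{Y}$ only restates your isometry computation.

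The missing step links the two domains: once the source measure on $\mathcal{X}$ and the whitened target are both isotropic, the transport map between them has $\mathbf{J}_\Phi\mathbf{J}_\Phi^\top\propto I$, hence $\kappa(\mathbf{J}_\Phi)\approx 1$. You can make this concrete with Gaussians. The Brenier map from $\mathcal{N}(m,\sigma^2 I)$ to $\mathcal{N}(\bm{\mu}_k,\bm{\Sigma}_k)$ is $x\mapsto\bm{\mu}_k+\sigma^{-1}\bm{\Sigma}_k^{1/2}(x-m)$, with distortion $\sqrt{\kappa(\bm{\Sigma}_k)}$. Composing with $\psi_k$ gives $x\mapsto\sigma^{-1}(x-m)$, a similarity with distortion $1$. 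This holds modulo the dimension mismatch between $\mathbb{R}^{d_s}$ and the action chart, which the paper also glosses over.

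Separately, your plan for the ``main obstacle'' cannot deliver $G_\mathcal{Y}\approx\bm{\Sigma}_k^{-1}$. Bounding the differential of the logarithm inside the injectivity radius compares the geodesic metric on $SE(3)^T$ with a fixed inner product on $\mathfrak{se}(3)$. It carries no information about the data covariance $\bm{\Sigma}_k$. So if $d_\mathcal{Y}$ is the true geodesic distance, the constants of $\psi_k^{-1}$ relative to $d_\mathcal{Y}$ are $\sqrt{\lambda_{\min}(\bm{\Sigma}_k)}$ and $\sqrt{\lambda_{\max}(\bm{\Sigma}_k)}$ up to factors $1\pm O(r_k)$, not $1\pm O(r_k)$. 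The identification $G_\mathcal{Y}\approx\bm{\Sigma}_k^{-1}$ is a statistical (Mahalanobis) modeling assumption. The paper simply posits it, here and in Lemma~\ref{lemma:statistical-mismatch} where it sets $G_\mathcal{Y}=\bm{\Sigma}_\mathcal{Y}^{-1}$. You should state it as an assumption rather than try to derive it from the Lie-group geometry.
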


\textbf{Proof.}
The optimization difficulty is governed by the curvature of the loss landscape, represented by the Hessian matrix $H$. In the original space, the local metric tensor is $G_{\mathcal{Y}} \approx \bm{\Sigma}_k^{-1}$, leading to a Hessian condition number $\kappa(H) \approx \lambda_{\max}(\bm{\Sigma}_k)/\lambda_{\min}(\bm{\Sigma}_k) \gg 1$.
The whitening transformation $\psi_k(\va) = \bm{\Sigma}_k^{-1/2}(\va - \bm{\mu}_k)$ acts as a preconditioning operator on the manifold geometry. Under this change of basis, the effective metric tensor transforms as:
\begin{equation}
    \tilde{G}_{\mathcal{Y}} = (\bm{\Sigma}_k^{1/2})^\top G_{\mathcal{Y}} (\bm{\Sigma}_k^{1/2}) = \bm{\Sigma}_k^{1/2} \bm{\Sigma}_k^{-1} \bm{\Sigma}_k^{1/2} = I.
\end{equation}
Consequently, the target distribution becomes isotropic, i.e., the whitened action distribution is approximately $\mathcal{N}(0, I)$. The mapping task reduces to finding a transport map between two isotropic spaces $\mathcal{X} \to \psi_k(\mathcal{Y}_k)$. The Jacobian $\mathbf{J}_\Phi$ of such an optimal map is approximately orthogonal, satisfying $\mathbf{J}_\Phi \mathbf{J}_\Phi^\top \propto I$.
This implies the condition number collapses to unity ($\kappa(\mathbf{J}_\Phi) \approx 1$), directly satisfying the ideal \textbf{Bi-Lipschitz} constraint with constant $C \approx 1$:
\begin{equation}
    \|\psi_k(\Phi(x_1)) - \psi_k(\Phi(x_2))\|_2 \approx \|x_1 - x_2\|_2.
\end{equation}
This statistical rectification guarantees that the gradient descent steps are isotropic, ensuring more reliable GW alignment across modalities.
\qed

\begin{figure*}[ht]
    \centering
    \includegraphics[width=0.9\textwidth]{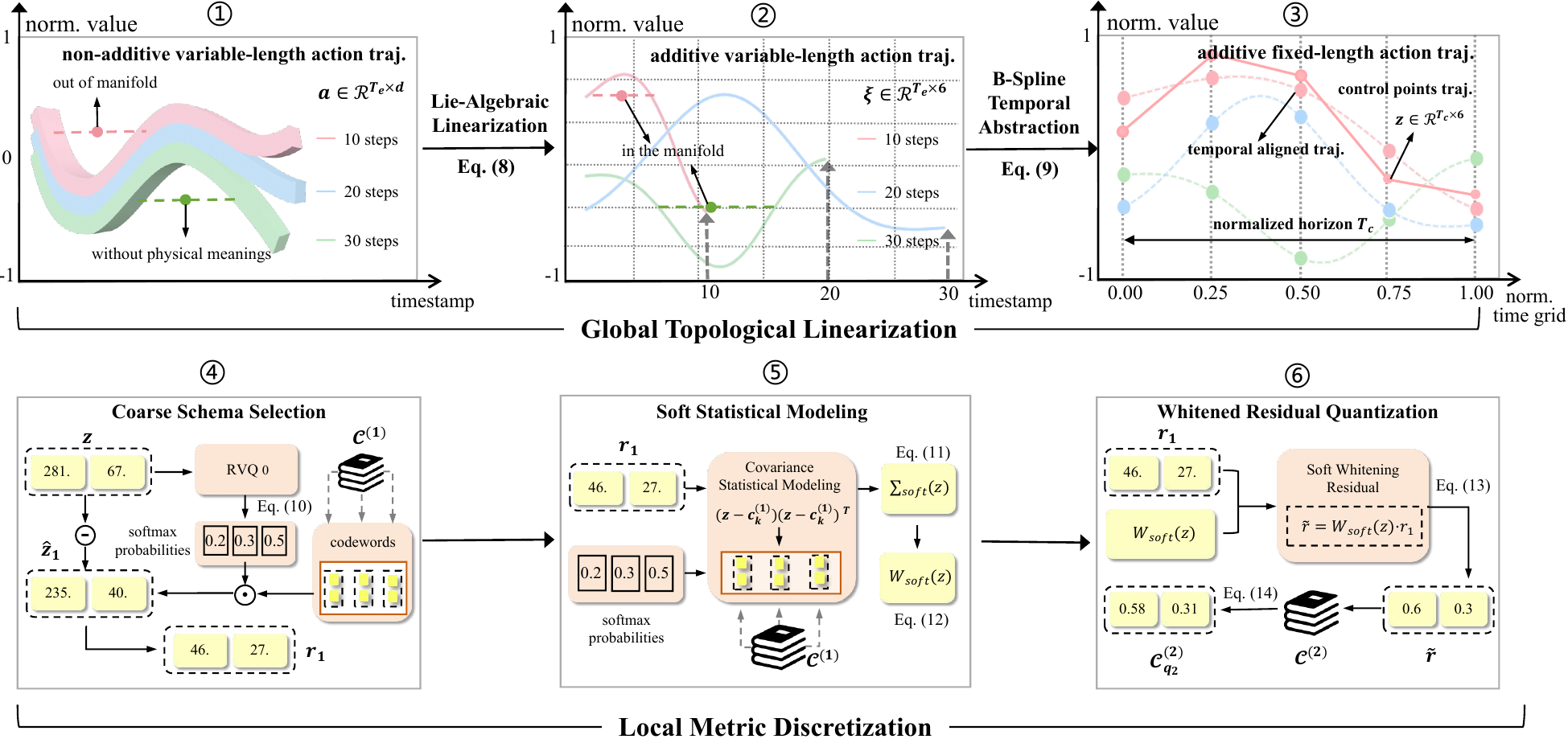}
    \caption{\textbf{Overview of \method{} Tokenization Pipeline.}
    In Global Topological Linearization, the variable-length action trajectories are first linearized into the Lie-algebraic tangent space and then abstracted into fixed-length B-spline control points. In Local Metric Discretization, the control points are first coarsely quantized to select a global schema, followed by a covariance-aware whitening process that rectifies anisotropic residuals for optimal fine-grained quantization.
    }
    \label{fig:LAST}
    \vspace{-10pt}
\end{figure*}

\subsection{\method{}: Lie-Algebraic Action Space Tokenizer}
\label{sec:last}

Based on the above analysis, we propose \method{}, a Lie-algebraic action space tokenization framework instantiating the GW perspective via structural alignment of intra-domain geometries.
We structurally decompose the process into two phases: 
(1) Global Topological Linearization: transforming the curved, variable-length manifold into a unified, fixed-length linear vector space; and 
(2) Local Metric Discretization: partitioning this linear space into local schemas and rectifying the internal geometry of each partition for optimal quantization. 
The complete pipeline is illustrated in Fig.~\ref{fig:LAST}.

\subsubsection{Global Topological Linearization}
To resolve the non-additivity issue, the first objective is to map the non-Euclidean action manifold into a globally consistent linear space. This produces a unified linear vector space where arithmetic operations are physically valid.

\textbf{Lie-Algebraic Linearization.}
To ensure physical additivity, we map the raw trajectory $\va=\{P_t\}_{t=1}^{T} \in SE(3)^{T}$ into the Lie-algebraic tangent space, where $T$ denotes the action horizon. We compute the relative motion with respect to the start frame $P_1$ and apply the logarithmic map to obtain tangent vectors $\bm{\xi}_t \in \mathbb{R}^6$:
\begin{equation}
\label{eq:log}
    \bm{\xi}_t = \log(P_{1}^{-1} \cdot P_t)^\vee.
\end{equation}
Here, the logarithmic map projects the manifold element onto the Lie algebra $\mathfrak{se}(3)$, a space of $4 \times 4$ matrices characterized by skew-symmetric rotation blocks. The operator $(\cdot)^\vee$ denotes the isomorphism that maps this matrix algebra to the Euclidean coordinate vector $\mathbb{R}^6$, where motion composition is approximately additive within a local chart (shown in Lemma~\ref{lemma:intra-cluster-additivity}).
Consequently, the action chunk is formed by stacking these vectors: ${\bm\xi} = [\bm{\xi}_1, \dots, \bm{\xi}_{T}]^\top$. This transforms the curved manifold into a linear vector space, ensuring that subsequent linear operations preserve geometric validity.

\textbf{B-Spline Temporal Abstraction.}
To handle temporal redundancy and variable horizon lengths, we parameterize the sequence $\bm{\xi}$ using B-splines. We solve for a sparse set of control points $\vz \in \mathbb{R}^{T_c \times 6}$, where $T_c$ is a fixed latent horizon. We use ridge regression to suppress high-frequency teleoperation noise and prevent overfitting:
\begin{equation}
\label{eq:bspline_fitting}
    \vz = \arg\min_{\vz'} \|\bm{\Phi}\vz' - \bm{\xi}\|_F^2 + \lambda \|\vz'\|_F^2 =
    (\bm{\Phi}^\top\bm{\Phi} + \lambda \mathbf{I})^{-1}\bm{\Phi}^\top\bm{\xi},
\end{equation}
where $\bm{\Phi} \in \mathbb{R}^{T \times T_c}$ is the uniform B-spline basis matrix (a pre-defined hyperparameter determined by the knot placement and spline degree), acting as a linear temporal decoder that enforces smoothness constraint via the spline degree settings. The inferred control points $\vz$ serve as a compact, fixed-length parametrization for quantization, converting variable-horizon trajectories into a unified representation of length $T_c$. The complete solving process is in Sec.~\ref{app:bspline}.

\subsubsection{Local Metric Discretization}
To address the issue of structural mismatch, the second objective aims to discretize the continuous vectors $\vz$ into specific topological neighborhoods defined by ``action schemas'', while rectifying the action manifold’s local anisotropy.

\textbf{Coarse Schema Selection.}
We first partition the continuous linear space into local neighborhoods by quantizing $\vz$ with the Layer 1 codebook $\mathcal{C}^{(1)} = \{\vc_k^{(1)}\}_{k=1}^K$. Each codeword is intended to act as a topological anchor, defining the center of a local Voronoi region. To enable differentiable switching between these local charts, we compute the soft-assignment probability $p_k(\vz)$ using a temperature-scaled softmax:
\begin{equation}
\label{eq:soft_assign}
    p_k(\vz) = \frac{\exp(-\|\vz - \vc_k^{(1)}\|^2 / \tau)}{\sum_{j=1}^K \exp(-\|\vz - \vc_j^{(1)}\|^2 / \tau)}.
\end{equation}
The coarse reconstruction $\hat{\vz}^{(1)} = \sum_{k=1}^K p_k(\vz) \vc_k^{(1)}$ represents the barycenter of the selected local neighborhood.

\textbf{Soft Statistical Modeling.}
Within each local chart defined by an anchor $k$, the residual errors $\vr = \vz - \hat{\vz}^{(1)}$ exhibit highly anisotropic distributions specific to that neighborhood. We maintain a running estimate of this local covariance $\bm{\Sigma}_k$ during training.
Instead of a hard lookup, we compute a Soft Whitening Matrix $\mathcal{W}_{soft}$ by interpolating these local statistics based on the soft assignments. This ensures the metric transformation is smooth across the boundaries of local charts:
\begin{equation}
\label{eq:soft_covariance}
    \bm{\Sigma}_{soft}(\vz) = \sum_{k=1}^{K} p_k(\vz) \left[ (\vz - \vc_k^{(1)})(\vz - \vc_k^{(1)})^\top + \epsilon \mathbf{I} \right],
\end{equation}
\begin{equation}
\label{eq:soft_whitening}
    \mathcal{W}_{soft}(\vz) = \bm{\Sigma}_{soft}(\vz)^{-1/2}.
\end{equation}
Here, $\mathcal{W}_{soft}$ acts as a local metric tensor correction, normalizing the curvature within the specific neighborhood.

\textbf{Whitened Residual Quantization.}
We project the raw residual into the rectified local frame via the affine whitening transformation:
\begin{equation}
\label{eq:soft_whitening_residual}
    \tilde{\vr} = \mathcal{W}_{soft}(\vz) \cdot (\vz - \hat{\vz}^{(1)}).
\end{equation}
In this rectified frame, the residual $\tilde{\vr}$ is approximately isotropic, i.e., $\tilde{\vr}\sim\mathcal{N}(0,\mathbf{I})$. 
We therefore \emph{define} the Layer-2 quantizer to operate in this whitened coordinate system, where Euclidean distance is a faithful proxy of likelihood under the rectified metric. 
Concretely, the Layer-2 codebook $\mathcal{C}^{(2)}=\{\vc_j^{(2)}\}$ is learned in the whitened space and is responsible for capturing the remaining fine-grained, isotropic variations.
As a result, we can perform standard vector quantization on $\tilde{\vr}$:
\begin{equation}
    q_2 = \arg\min_j \| \tilde{\vr} - \vc_j^{(2)} \|_2^2.
\end{equation}
The final high-fidelity representation is reconstructed by mapping back from the local frame to the global space:
\begin{equation}
\label{eq:soft_recon}
    \hat{\vz} = \hat{\vz}^{(1)} + \bm{\Sigma}_{soft}(\vz)^{1/2} \cdot \vc_{q_2}^{(2)}.
\end{equation}
This mechanism ensures that the quantization lattice adapts to the principal axes of the local physical variance. The final result provides supervision for VLA that preserves both spatial and temporal consistency.

\subsubsection{Unified VLA Learning Paradigm}
The training of the full system proceeds in two stages, designed to optimize the topological and metric alignments respectively. 

\textbf{Stage 1: Tokenizer Pre-training.}
We first train the tokenizer $\mathcal{T}$ to minimize a compound objective consisting of reconstruction loss and commitment loss.
The reconstruction loss $\mathcal{L}_{rec}$ measures fidelity in the physical tangent space, ensuring the geometric validity of the composed trajectory:
\begin{equation}
\label{eq:rec}
    \mathcal{L}_{rec} = \mathbb{E}_{\vz \sim \mathcal{D}} \left[ \| \vz - \hat{\vz} \|_F^2 \right].
\end{equation}
The commitment loss $\mathcal{L}_{com}$ operates in the whitened latent space. It constrains the whitened residuals to cluster around the codebook vectors, ensuring the codebook matches the isotropic distribution:
\begin{equation}
\label{eq:com}
    \mathcal{L}_{com} =  \left( \| \sg[\tilde{\vr}] - \vc_{q_2} \|_2^2 + \beta \| \tilde{\vr} - \sg[\vc_{q_2}] \|_2^2 \right),
\end{equation}
where $\sg[\cdot]$ is the stop-gradient operation.
The tokenizer objective is $\mathcal{L}_{tok} = \mathcal{L}_{rec} + \lambda_1 \mathcal{L}_{com}$ , where $\lambda_1$ denotes the weighting coefficient.

\textbf{Stage 2: VLA Dual-Objective Training.}
The VLA policy $\pi_\theta$ bridges vision-language inputs $(\vo,\vl)$ to actions via a dual-head architecture. 
Let $h$ be the final-layer hidden state at the action step shared by both heads: an AR head predicts discrete action tokens to capture discretized coarse structure, while a diffusion head generates continuous actions to refine fine-grained control.

(1) \textit{Discrete Head (Topological Alignment):} The supervision targets $\bar{\vq}$ are obtained by encoding the ground-truth trajectory $\va$ using the frozen \method{} tokenizer $\mathcal{T}$. Specifically, the coarse schema token is obtained by selecting the mode of the soft distribution: $\bar{q}^{(1)} = \arg\max_k p_k(\vz)$. The subsequent residual tokens $\bar{q}^{(2)}$ are obtained via whitened residual quantization.
Optimizing $\mathcal{L}_{dis}$ forces the model to select the correct topological schema, aligning the semantic intent with the global action topology:
\begin{equation}
\label{eq:dis}
    \mathcal{L}_{dis} = -\mathbb{E}_{(\vs, \bar{\vq}) \sim \mathcal{D}} \left[ \sum_{l} \log P_\theta(\bar{q}^{(l)} \mid \vs) \right].
\end{equation}

(2) \textit{Continuous Head (Metric Refinement):} The default \method{} model attaches a Diffusion Transformer (DiT) head to hidden states $h$, predicting the continuous action chunk via iterative denoising on whitened Lie-algebraic actions. The training objective is the standard noise-prediction MSE:
\begin{equation}
\label{eq:con}
    \mathcal{L}_{con} = \mathbb{E}_{\va_0,\,\epsilon \sim \mathcal{N}(0,\mathbf{I}),\,k} \left[ \| \epsilon - \epsilon_\theta(\va_k, k, h) \|_2^2 \right],
\end{equation}
where $\va_k$ is the noisy action at diffusion step $k$. For lightweight ablations, we also evaluate an MLP regression head that directly regresses $\phi(h)$ to $\va$ via mean squared error (see Tab.~\ref{tab:abl_head}); the default \method{}-Continuous model uses the DiT denoising head.

The total objective $\mathcal{L}_{vla} = \mathcal{L}_{dis} + \lambda_2 \mathcal{L}_{con}$ unifies intent selection with dynamic execution, where $\lambda_2$ denotes the weighting coefficient.

%% file: sec/4_exp.tex
\section{Experiments}




\subsection{Implementation Details}
In real-world experiments, the backbone $\pi_\theta$ is a purely autoregressive Transformer with \textbf{3B} parameters, following the \textbf{Qwen2.5} configuration~\citep{bai2025qwen2}.
The original action horizons in LIBERO, SimplerEnv, and real-world experiments are set to 8, 16, and 30 frames, respectively.
For tokenizer pretraining, we train for 5 epochs to update the codebooks with loss weight $\lambda_1{=}0.5$, using a B-spline temporal basis with $T_c{=}16$ and a codebook size of $K{=}1024$.
For VLA fine-tuning, we train for 5 epochs with a batch size of 256 on 8\(\times\)A800 GPUs, using loss weight $\lambda_2{=}10$.
Additional hyperparameter selections are provided in Sec.~\ref{app:experiments}.

\subsection{Benchmarks}
Comprehensive descriptions of all task definitions, data collection protocols, and environment settings are provided in Sec.~\ref{app:data}.
\noindent\textbf{Real-World Benchmarks.}
We evaluate three real-world tasks that emphasize different aspects of manipulation: PlaceObj, ZipSeal, and TubeRack.  
Concretely, we collect 200 demonstrations per task on AgileX Cobot Magic platform, sampled at 30 Hz.
Each trajectory spans 400 $\sim$ 600 frames and logs the full end-effector space control signals for both arms. 
Synchronized RGB-D imagery is recorded from three viewpoints: a first-person, top-down egocentric view using an Intel RealSense D455, and two wrist-mounted views using Intel RealSense D435 cameras.

\noindent\textbf{Simulation Benchmarks.}
\textit{LIBERO}~\citep{liu2024libero} comprises four task suites: spatial, object, goal, and long-horizon compositional. Each suite contains 10 robotic manipulation tasks, with 50 demonstrations provided for each task.
\textit{SimplerEnv}~\citep{li24simpler} reflects the performance of real-world policies by replicating physical dynamics and visual appearance, encompassing diverse variations in lighting, textures, and viewpoints.

\begin{table}[t]
\centering
\scriptsize
\caption{\textbf{Reconstruction and Compression Comparison.}}
\label{tab:tokenizer_comparison}
\begin{tabular}{l|c|cc|c}
\toprule
\textbf{Method} & \textbf{Space} & \textbf{MAE ($\downarrow$)} & \textbf{CR ($\uparrow$)} & \textbf{CU ($\uparrow$)} \\
\midrule
Binning & Euclidean & $4.2\text{e-}2$ & $1.0\times$ & N/A \\
FAST & Frequency & $1.8\text{e-}2$ & $4.0\times$ & $78\%$ \\
RVQVAE & Euclidean & $1.1\text{e-}2$ & $7.4\times$ &$51\%$ \\
\midrule
\textbf{\method{} (Ours)} & \textbf{Tangent} & $\mathbf{0.6\text{e-}2}$ & $\mathbf{8.0\times}$ & $\mathbf{96\%}$ \\
\bottomrule
\end{tabular}
\vspace{-18pt}
\end{table}

\begin{table*}[t]
\centering
\caption{\textbf{Experimental Results for the LIBERO Benchmarks.} Best results are bolded.}
\vspace{-4pt}
\scalebox{0.6}
{
\begin{tabular}{l|cc|cc|cc|cc|cc}
\toprule
\multirow{2}{*}{\textbf{Model}} & \multicolumn{2}{c|}{\textbf{Spatial}} & \multicolumn{2}{c|}{\textbf{Object}} & \multicolumn{2}{c|}{\textbf{Goal}} & \multicolumn{2}{c|}{\textbf{Long}}  & \multicolumn{2}{c}{\textbf{Average}} \\
& SR ($\uparrow$) & Rank ($\downarrow$) & SR ($\uparrow$) & Rank ($\downarrow$) & SR ($\uparrow$) & Rank ($\downarrow$) & SR ($\uparrow$) & Rank ($\downarrow$) & SR ($\uparrow$) & Rank ($\downarrow$) \\
\midrule
Octo~\citep{octo_2023} & 78.9\% & 11 & 85.7\% & 11 & 84.6\% & 9 & 51.1\% & 11 & 75.1\% & 11 \\
OpenVLA~\citep{kim2024openvla}  & 84.7\% & 10 & 88.4\% & 10 & 79.2\% & 10 & 53.7\% & 10 & 76.5\% & 10 \\
SpatialVLA~\citep{spatialvla2025} & 88.2\% & 9 & 89.9\% & 9 & 78.6\% & 11 & 55.5\% & 9 & 78.1\% & 9 \\
GR00T-N1.5~\citep{gr00tn1_2025} & 92.0\% & 8 & 92.0\% & 8 & 86.0\% & 8 & 76.0\% & 7 & 86.5\% & 7 \\
$\pi_0$~\citep{black2024pi_0} & 96.8\% & 2 & \textbf{98.8\%} & \textbf{1} & 95.8\% & 2 & 85.2\% & 6 & 94.1\% & 2 \\
\midrule
$\pi_0$-FAST~\citep{pertsch2025fast} & 96.4\% & 4 & 96.8\%  & 6 & 88.6\% & 7 & 60.2\% & 8 & 85.5\% & 8 \\
$\pi_0$-FAST-Continuous & 96.6\% & 3 & 97.2\% & 4 & 93.1\% & 5 & 89.4\% & 2 & 94.1\% & 2 \\
\midrule
BEAST~\citep{zhou2025beastefficienttokenizationbsplines} & 92.9\% & 7 & 97.5\% & 3 & 93.1\% & 5 & 86.4\% & 3 & 92.5\% & 6 \\
BEAST-Continuous & 94.1\% & 5 & 96.8\% & 6 & 95.1\% & 3 & 85.7\% & 5 & 92.9\% & 5 \\
\midrule
\rowcolor{blue!5}
\textbf{\method{}-Discretized} & 94.1\% & 5 & 98.7\% & 2 & 94.6\% & 4 & 86.0\% & 4 & 93.4\% & 4 \\
\rowcolor{blue!10}
\textbf{\method{}-Continuous} & \textbf{98.4\%} & \textbf{1} & 97.0\% & 5 & \textbf{96.7\%} & \textbf{1} & \textbf{91.2\%} & \textbf{1} & \textbf{95.8\%} & \textbf{1} \\
\bottomrule
\end{tabular}
}
\label{tab:libero_results_with_ours}
\vspace{-8pt}
\end{table*}

\begin{table*}[t]
\centering
\caption{\textbf{Evaluation on SimplerEnv–WidowX across diverse manipulation tasks.} }
\vspace{-4pt}
\label{tab:simplerenv_bridge}
\resizebox{0.75\textwidth}{!}{
\begin{tabular}{l|cc|cc|cc|cc|c}
\toprule
\multirow{2}{*}{\textbf{Model}} 
& \multicolumn{2}{c|}{\textbf{Put Spoon}} 
& \multicolumn{2}{c|}{\textbf{Put Carrot}} 
& \multicolumn{2}{c|}{\textbf{Stack Block}} 
& \multicolumn{2}{c|}{\textbf{Put Eggplant}} 
& \textbf{Overall} \\
\cmidrule(lr){2-3} \cmidrule(lr){4-5} \cmidrule(lr){6-7} \cmidrule(lr){8-9} \cmidrule(lr){10-10}
& \multicolumn{1}{c}{Grasp} & \multicolumn{1}{c|}{Success} 
& \multicolumn{1}{c}{Grasp} & \multicolumn{1}{c|}{Success} 
& \multicolumn{1}{c}{Grasp} & \multicolumn{1}{c|}{Success} 
& \multicolumn{1}{c}{Grasp} & \multicolumn{1}{c|}{Success} 
& \multicolumn{1}{c}{Success} \\
\midrule
Octo-Base~\citep{octo_2023} 
& 34.7\% & 12.5\%  
& 52.8\% & 8.3\%   
& 31.9\% & 0.0\%   
& 66.7\% & 43.1\%  
& 16.0\% \\
$\pi_0$~\citep{black2024pi_0}
& - & 29.1\%
& - & 0.0\%
& - & 16.6\%
& - & 62.5\%
& 27.1\% \\
Octo-Small~\citep{octo_2023}
& 77.8\% & 47.2\%  
& 27.8\% & 9.7\%   
& 40.3\% & 4.2\%   
& 87.5\% & 56.9\%  
& 29.5\% \\
RoboVLMs~\citep{li2024towards}
& 70.8\% & 45.8\%  
& 33.3\% & 20.8\%  
& 54.2\% & 4.2\%  
& 91.7\% & 79.2\%  
& 37.5\% \\
OpenVLA-OFT~\citep{kim2025fine}
& - & 34.5\%
& - & 30.0\%
& - & 30.0\%
& - & 72.5\%
& 41.8\% \\
SpatialVLA~\citep{spatialvla2025} 
& 20.8\% & 16.7\% 
& 29.2\% & 25.0\% 
& 62.5\% & 29.2\% 
& 100\% & 100\% 
& 42.7\% \\
\midrule
$\pi_0$-FAST~\citep{pertsch2025fast}
& 33.3\% & 29.2\% & 25.0\% & 20.8\% & 37.5\% & 12.5\% & 75.0\% & 66.7\% & 32.3\% \\
$\pi_0$-FAST-Continuous
& 70.8\% & 45.8\% & 66.7\% & \textbf{41.7\%} & 58.3\% & 25.0\% & 79.2\% & 70.8\% & 45.8\% \\
\midrule
BEAST~\citep{zhou2025beastefficienttokenizationbsplines}
& 66.7\% & 41.7\%  
& 37.5\% & 25.0\%  
& 50.0\% & 20.8\%  
& 87.5\% & 75.0\%  
& 37.5\% \\
BEAST-Continuous
& 54.2\% & 37.5\% & 70.8\% & \textbf{41.7\%} & 33.3\% & 12.5\% & 87.5\% & 70.8\% & 40.6\% \\
\midrule
\rowcolor{blue!5}
\textbf{\method{}-Discretized}
& 83.3\% & 58.3\% 
& \textbf{79.2\%} & 37.5\% 
& \textbf{83.3\%} & 29.2\% 
& \textbf{100.0\%} & \textbf{95.8\%}  
& 55.2\% \\
\rowcolor{blue!10}
\textbf{\method{}-Continuous}
& \textbf{87.5\%} & \textbf{62.5\%} 
& \textbf{79.2\%} & \textbf{41.7\%} 
& 79.2\% & \textbf{33.3\%} 
& \textbf{100.0\%} & 91.7\%  
& \textbf{57.3\%} \\
\bottomrule
\end{tabular}}
\vspace{-2pt}
\end{table*}

\subsection{Advantages of Manifold Alignment}
\label{sec:rq1}

We evaluate the reconstruction quality and compression efficiency of \method{} against representative tokenization baselines: (i) \textbf{Dimension-wise Binning}~\citep{kim2024openvla}; (ii) \textbf{Frequency-domain BPE}~\citep{pertsch2025fast}; and (iii) \textbf{Standard RVQ-VAE}~\citep{lee2024behavior}. 
We report three key metrics: (1) \textbf{Mean Absolute Error (MAE)}; (2) \textbf{Compression Ratio (CR)}; and (3) \textbf{Codebook Utilization (CU)}. The detailed definitions are in Sec.~\ref{app:metrics}. 

As summarized in Tab.~\ref{tab:tokenizer_comparison}, \method{} achieves a Pareto-optimal balance between precision and compression.
By operating in the Lie-algebraic tangent space, \method{} avoids the geometric distortions inherent in Euclidean baselines. This allows it to achieve a reconstruction error (MAE) of $\mathbf{0.6\text{e-}2}$, which is nearly $\mathbf{50\%}$ lower than the Euclidean RVQVAE ($1.1\text{e-}2$) and $\mathbf{3\times}$ better than FAST ($1.8\text{e-}2$), proving that manifold alignment is crucial for high-fidelity representation.
Meanwhile, the proposed \textit{Covariance-Aware Whitening} mechanism significantly boosts Codebook Utilization ($\mathbf{96\%}$ vs. $51\%$). Standard RVQVAE suffers from severe ``dead codes'' because it attempts to fit spherical clusters to highly anisotropic robot data. In contrast, \method{} rectifies the latent space into an isotropic distribution, enabling more codewords to contribute to describing the motion manifold.

To further illustrate the impact of these tokenizer-level gains, we compare end-to-end learning dynamics on the LIBERO benchmark. 
We report the average performance results in Fig.~\ref{fig:tc} and sub-task results in Fig.~\ref{fig:libero_tc_sub}.
Results indicate that \method{} consistently attains higher success rates and reaches its performance plateau earlier at 2.5k steps (compared to 3.5k for $\pi_0$-FAST and 4k for BEAST).
This accelerated convergence can be attributed to the discrete action intent induced by \method{}. It simplifies the optimization landscape, allowing the downstream policy to learn complex manipulation tasks more efficiently.

\begin{figure}[t]
  \centering
  \begin{minipage}{0.48\linewidth}
    \centering
    \includegraphics[width=\linewidth]{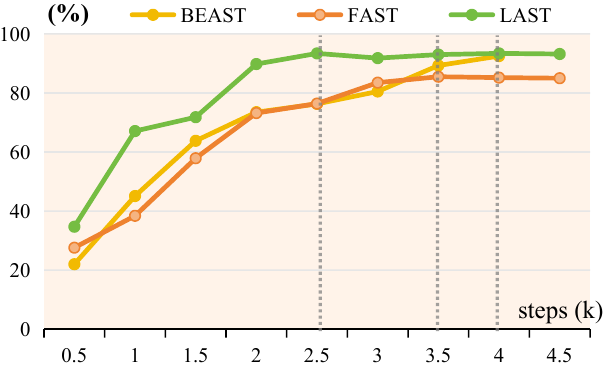}
    \caption{\textbf{Training Convergence on LIBERO.}}
    \label{fig:tc}
  \end{minipage}
  \hfill
  \begin{minipage}{0.48\linewidth}
    \centering
    \includegraphics[width=\linewidth]{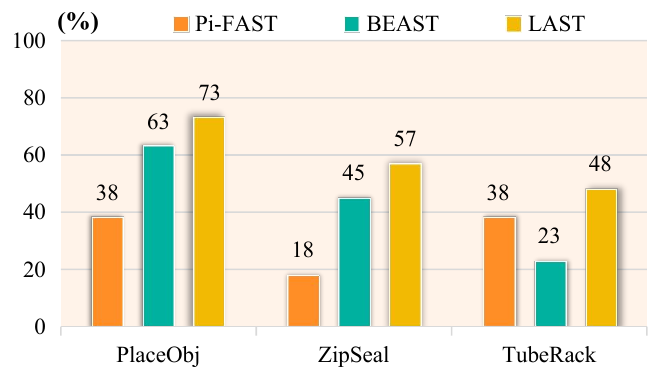}
    \caption{\textbf{Real-World Evaluation Comparison.}}
    \label{fig:real-world}
  \end{minipage}
  \vspace{-16pt}
\end{figure}

\subsection{Gains in Generalist VLA Learning}
\label{sec:rq2}
In tokenizer-based methods, ``Discretized'' refers to using predicted tokens directly for action generation, while ``Continuous'' refers to generating actions based on trained hidden states with an additional Diffusion Transformer (DiT) head, allowing for finer control over action generation.

\begin{table*}[t]
    \centering
    \caption{\textbf{Ablations of \method{} Tokenizer Components.}}
    \begin{subtable}[t]{0.45\linewidth}
        \scriptsize
        \centering
        \caption{\textbf{Effects of Manifold Alignment.}}
        \resizebox{0.8\linewidth}{!}{%
        \begin{tabular}{cc|cccc}
            \toprule
            $\mathfrak{se}(3)$ & $\mathcal{W}$ & \textbf{Spatial} & \textbf{Object} & \textbf{Goal} & \textbf{Long} \\
            \midrule
            \ding{55} & \ding{51} & 88.5 & 96.1 & 93.8 & 83.4 \\ 
            \ding{51} & \ding{55} & 92.8 & 93.5 & 89.2 & 84.1 \\ 
            \midrule
            \rowcolor{blue!10} 
            \ding{51} & \ding{51} & \textbf{94.1} & \textbf{98.7} & \textbf{94.6} & \textbf{86.0} \\
            \bottomrule
        \end{tabular}
        }
        \label{tab:abl_manifold}
    \end{subtable}
    \hfill
    \begin{subtable}[t]{0.48\linewidth}
        \scriptsize
        \centering
        \caption{\textbf{Effects of Tokenizer Training Objectives.}}
        \resizebox{0.95\linewidth}{!}{%
        \begin{tabular}{l|cccc}
            \toprule
            \textbf{Setting} & \textbf{Spatial} & \textbf{Object} & \textbf{Goal} & \textbf{Long}\\
            \midrule
            w/o $\mathcal{L}_{com}$ ($\lambda_1{=}0$) & 82.4 & 93.6 & 89.1 & 66.2 \\
            dominant $\mathcal{L}_{com}$ ($\lambda_1{=}10$) & 51.8 & 38.7 & 73.5 & 26.4 \\
            \midrule
            \rowcolor{blue!10}
            Full ($\lambda_1{=}0.5$) & \textbf{94.1} & \textbf{98.7} & \textbf{94.6} & \textbf{86.0} \\
            \bottomrule
        \end{tabular}
        }
        \label{tab:abl_loss}
    \end{subtable}
\end{table*}

\begin{table*}[t]
    \centering
    \caption{\textbf{Ablations of VLA Policy Architecture and Objectives.}}
    \begin{subtable}[t]{0.45\linewidth}
        \centering
        \caption{\textbf{Effect of Continuous Head Architecture.}}
        \resizebox{0.8\linewidth}{!}{%
        \begin{tabular}{c|ccc}
            \toprule
           \textbf{Continuous Head} & \textbf{PlaceObj} & \textbf{ZipSeal} & \textbf{TubeRack} \\
            \midrule
            MLP-Net & \textbf{80\%} & 52\% & 43\% \\
            \rowcolor{blue!10}
            DiT-Block  & 73\% & \textbf{57\%} & \textbf{48\%} \\
            \bottomrule
        \end{tabular}
        }
        \label{tab:abl_head}
    \end{subtable}
    \hfill
    \begin{subtable}[t]{0.46\linewidth}
        \centering
        \caption{\textbf{Effect of Discrete Supervision.}}
        \resizebox{0.8\linewidth}{!}{%
        \begin{tabular}{c|ccc}
            \toprule
            \textbf{Discrete Sup. ($\mathcal{L}_{dis}$)} & \textbf{PlaceObj} & \textbf{ZipSeal} & \textbf{TubeRack} \\
            \midrule
            \ding{55} ~(w/o $\mathcal{L}_{dis}$) & 45\% & 22\% & 18\% \\
            \rowcolor{blue!10}
            \ding{51} ~(w/ $\mathcal{L}_{dis}$) & \textbf{73\%} & \textbf{57\%} & \textbf{48\%} \\
            \bottomrule
        \end{tabular}
        }
        \label{tab:abl_supervision}
    \end{subtable}
\end{table*}

\textbf{Simulation Benchmarks Evaluation.}
On \textbf{LIBERO} (Table~\ref{tab:libero_results_with_ours}), \method{}-Continuous achieves a new state-of-the-art average success rate of \textbf{95.8\%}, outperforming both $\pi_0$-FAST-Continuous and $\pi_0$ (both at 94.1\%). Notably, while continuous refinement generally improves performance across all tokenizers (e.g., FAST improves from 85.5\% to 94.1\%), \method{} derives the most significant benefit in long-horizon tasks (improving from 86.0\% to 91.2\%), confirming that our metric-aligned latent space provides a superior foundation for fine-grained trajectory regression. Even in its discretized form, \method{} remains highly competitive, ranking fourth overall.
On \textbf{SimplerEnv} (Table~\ref{tab:simplerenv_bridge}), \method{} demonstrates exceptional generalization. \method{}-Continuous achieves the highest overall success rate of \textbf{57.3\%}, surpassing both $\pi_0$-FAST-Continuous (45.8\%) and OpenVLA-OFT (41.8\%). It achieves strong performance on the Put Eggplant task (91.7\% success) while maintaining competitive results on precise pick-and-place tasks like Put Spoon (62.5\%). This validates that the structural isomorphism established by \method{} facilitates robust sim-to-real transfer and resilience to visual perturbations.

\begin{figure}[t]
  \centering
  \includegraphics[width=0.82\linewidth]{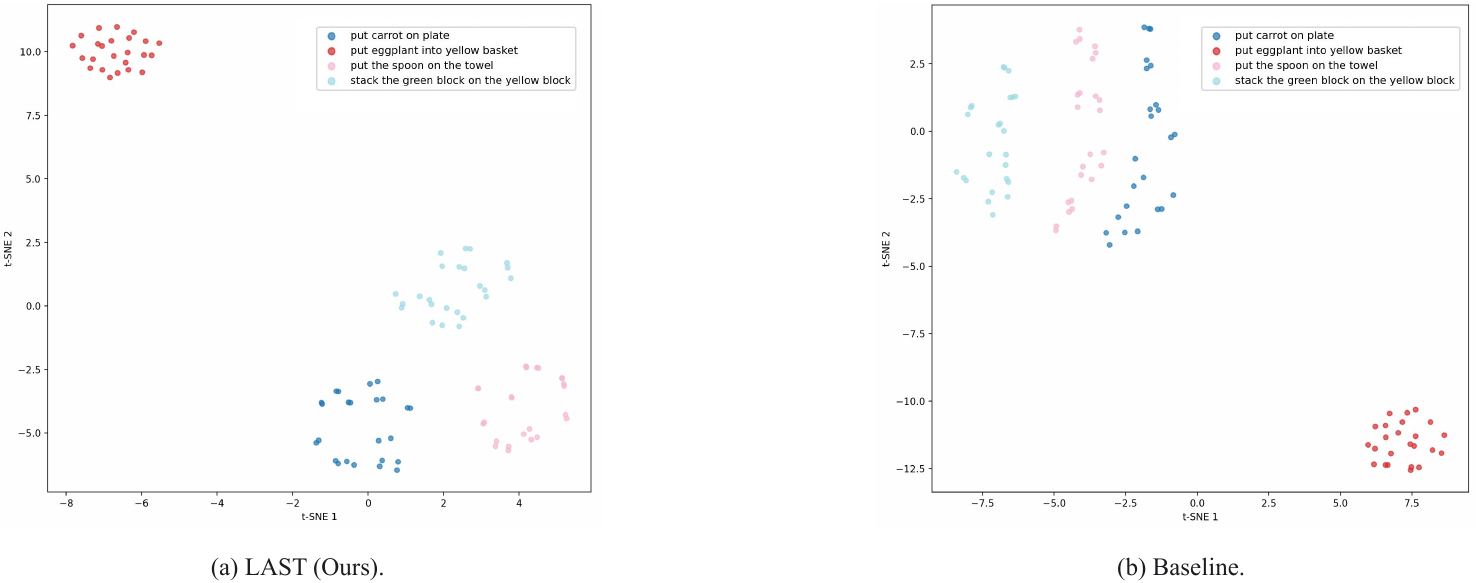}
  \caption{\textbf{t-SNE Visualization Comparison.}}
  \label{fig:tsne}
  \vspace{-18pt}
\end{figure}

\textbf{Qualitative Analysis.}
Figure~\ref{fig:tsne} presents the t-SNE visualization comparison of the hidden states in the four tasks of the SimplerEnv dataset. Panel (a) shows the results of \method{}, where distinct action clusters are clearly formed. In contrast, Panel (b) demonstrates the Baseline (without \method{} supervision), where the tasks exhibit more mixed clusters. This visual difference suggests that \method{} learns more coherent and task-specific representations, providing stronger separability in the latent space. Such good separability lays the foundation for effective GW alignment.

\textbf{Real-World Evaluation.}
As illustrated in Figure~\ref{fig:real-world}, \method{} achieves the best performance among the compared methods on our real-world benchmark, with success rates of 73\%, 57\%, and 48\% on PlaceObj, ZipSeal, and TubeRack, respectively, surpassing both $\pi_0$-FAST and BEAST. The performance gap highlights specific theoretical deficiencies in prior methods: $\pi_0$-FAST suffers catastrophic failure on the contact-rich ZipSeal task (18\%), confirming that global frequency-domain compression introduces detrimental Gibbs ringing artifacts during abrupt motion changes, while BEAST struggles with the high-precision TubeRack task (23\%), validating that Euclidean interpolation on the curved $SE(3)$ manifold causes metric distortions. In contrast, by leveraging Lie-algebraic manifold alignment for geometric validity and covariance-aware whitening for statistical efficiency, \method{} effectively mitigates these issues, enabling robust control in both deformable manipulation and precise insertion scenarios. The real-world comparison in Fig.~\ref{fig:real-world} uses the DiT-Block as the default continuous head.

\subsection{Ablation of Theoretical Components}
\label{sec:rq3}

We verify the critical design choices in both the \method{} tokenizer and the downstream VLA policy.

\noindent\textbf{Effects of Manifold Alignment.}
Tab.~\ref{tab:abl_manifold} confirms that our theoretical pillars are essential. Mapping to $\mathfrak{se}(3)$ ensures geometric validity, crucial for Spatial tasks (+5.6\%). Covariance-Aware Whitening ($\mathcal{W}$) prevents codebook collapse in anisotropic dimensions, boosting precision in Object task (+5.2\%) and Goal task (+5.4\%).

\noindent\textbf{Effects of Tokenizer Objectives.}
Tab.~\ref{tab:abl_loss} highlights the balance between fidelity and structure.
Removing the commitment loss ($\lambda_1{=}0$) causes the discrete codes to fail to cluster into stable schemas, so the VLA loses its "topological anchor" and degrades noticeably on \textbf{Long} horizon tasks (66.2\%).
Conversely, dominating the loss with the commitment term ($\lambda_1{=}10$) suppresses the reconstruction gradient: the discrete tokens lose their physical grounding and performance plummets (\emph{e.g.,} 26.4\% on Long).

\noindent\textbf{Effects of Continuous Head Architecture.}
In Tab.~\ref{tab:abl_head}, we compare using a simple MLP versus a DiT block for the continuous refinement head.
Interestingly, the \textbf{MLP} baseline matches or slightly exceeds DiT on the simpler PlaceObj task (80\% vs.\ 73\%), supporting our theory that the schema prior collapses the search space into a simple local convex basin, making it easy to learn even for simple networks.
However, the \textbf{DiT} head provides clear gains on contact-rich and high-precision tasks (ZipSeal $+5\%$, TubeRack $+5\%$), as it better models the residual uncertainty and high-frequency contact dynamics.

\noindent\textbf{Necessity of Discrete Schema Supervision.}
Tab.~\ref{tab:abl_supervision} provides the strongest evidence for our \textit{"topological collapse"} hypothesis.
When we remove the discrete supervision ($\mathcal{L}_{dis}$) and train the model via direct regression (even with a DiT head), performance plummets (\emph{e.g.,} ZipSeal drops from 57\% to 22\% and TubeRack drops from 48\% to 18\%).
This confirms that without explicitly predicting the action schema to lock the target basin, the continuous head fails to achieve high-precision manipulation.

\textbf{Hyperparameter Analysis.} We perform sensitivity analysis on key hyperparameters, including $\lambda_1$ for commitment regularization (Tab.~\ref{tab:lambda1_simpler}), $\lambda_2$ for planning-execution balance (Tab.~\ref{tab:lambda2_simpler}), $K$ for codebook size (Tab.~\ref{tab:codebook_sensitivity}), and $\beta$ for latent commitment (Tab.~\ref{tab:beta_ablation}), to assess their impact on performance.


%% file: sec/5_conclusion.tex
\section{Conclusion}
We introduced \method{} as a mathematically rigorous interface for the Action Modality. 
By leveraging Global Topological Linearization and Local Metric Discretization, \method{} resolves geometric and statistical mismatches in prior tokenizers. 
This results in a physically consistent latent space that transforms non-convex global search into local convex refinement, establishing the foundation for Gromov-Wasserstein Alignment between semantic and action space.

%% file: sec/6_appendix.tex
\section{Appendix}
\label{sec:appendix}

This supplementary material provides detailed theoretical proofs, experimental settings, and additional analysis to support the claims made in the main manuscript. The appendix is organized as follows:
\begin{itemize}
    \item \textbf{Sec.~\ref{app:theory}}: Detailed mathematical proofs for the lemmas regarding Statistical Mismatch, Mode Averaging, and Topological Collapse.
    \item \textbf{Sec.~\ref{app:data}}: Detailed descriptions of real-world and simulation benchmarks.
    \item \textbf{Sec.~\ref{app:model}}: Implementation details of the VLA architecture, specifically the Dual-Head mechanism, and training hyperparameters.
    \item \textbf{Sec.~\ref{app:experiments}}: Supplementary ablation studies on tokenizer hyperparameters (\emph{e.g.,} chunk length, codebook size, loss weights).
    \item \textbf{Sec.~\ref{app:bspline}}: Algorithmic details of the B-spline parameterization.
    \item \textbf{Sec.~\ref{app:metrics}}: Definitions of the evaluation metrics.
\end{itemize}

\subsection{Theoretical Proofs and Derivations}
\label{app:theory}

In this section, we provide rigorous proofs for the theoretical foundations of the \method{} framework. Our analysis follows a logical progression: we first prove the inherent failure of Euclidean regression on non-convex action manifolds (Lemma~\ref{lemma:mode-averaging}), then demonstrate the optimization instability caused by statistical anisotropy (Lemma~\ref{lemma:statistical-mismatch}), and finally justify the efficiency of our hybrid architecture via complexity analysis (Lemma~\ref{lemma:topological-collapse}). Lemma~\ref{lemma:intra-cluster-additivity} further establishes the intra-cluster additivity of our Lie-algebraic B-spline representation used in the global linearization stage.

\begin{lemma}[Mode Averaging on Non-Convex Manifolds]
\label{lemma:mode-averaging}
For a multi-modal conditional action distribution $P(\va|\vs)$, the Mean Squared Error (MSE) estimator $\Phi^*$ converges to a convex combination of modes that lies in the convex hull of the manifold $\text{Conv}(\mathcal{M})$ but generally falls into invalid regions outside $\mathcal{M}$ itself.
\end{lemma}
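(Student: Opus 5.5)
The plan is to characterize the MSE minimizer pointwise in $\vs$, then locate it relative to $\mathcal{M}$ using the Gaussian-mixture structure assumed in Lemma~\ref{lemma:local_closure}.

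\textbf{Step 1: the MSE optimum is the conditional mean.} Over measurable maps, the population risk
$\mathbb{E}_{(\va,\vs)}\|\va-\Phi(\vs)\|_2^2$ decomposes by the tower property into $\mathbb{E}_{\vs}\,\mathbb{E}_{\va\mid\vs}\|\va-\Phi(\vs)\|_2^2$. For each fixed $\vs$, expanding the square around $\mathbb{E}[\va\mid\vs]$ gives
\[
\mathbb{E}[\|\va-\Phi(\vs)\|^2\mid\vs] = \|\Phi(\vs)-\mathbb{E}[\va\mid\vs]\|^2 + \mathrm{tr}\,\mathrm{Cov}(\va\mid\vs),
\]
because the cross term vanishes. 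Hence $\Phi^*(\vs)=\mathbb{E}[\va\mid\vs]$. "Converges" is then read in the usual sense: as the sample size and model capacity grow, the empirical risk minimizer tends to this conditional expectation.

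\textbf{Step 2: convex-hull membership.} Substitute the mixture from Lemma~\ref{lemma:local_closure}. Linearity of expectation gives
\[
\Phi^*(\vs)=\sum_k \omega_k(\vs)\bm{\mu}_k,
\]
with $\omega_k(\vs)\ge 0$ and $\sum_k\omega_k(\vs)=1$. Since each $\bm{\mu}_k\in\mathcal{M}$, this is a convex combination of points of $\mathcal{M}$, so $\Phi^*(\vs)\in\mathrm{Conv}(\mathcal{M})$. More generally, without the Gaussian assumption, $\mathbb{E}[\va\mid\vs]$ lies in the closed convex hull of the support of $P(\cdot\mid\vs)$. I would state this by applying the standard fact that the expectation of a distribution lies in the closed convex hull of its support to the supporting-hyperplane argument.

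\textbf{Step 3: the estimator generally falls outside $\mathcal{M}$.} This step is the main obstacle, because "generally" must be made precise. I plan to do it in two parts.

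First, I would exhibit a concrete witness. Take two modes $\bm{\mu}_1,\bm{\mu}_2$ in $SO(3)\subset\mathbb{R}^{3\times3}$ (embedded via rotation matrices) with equal weights. Their average $\tfrac12(R_1+R_2)$ has determinant strictly less than $1$, and it is the zero matrix when $R_2=-R_1$ is not realizable but, for example, $R_2=R_1\,\mathrm{Rot}(\pi)$ about an axis gives a singular matrix. So the average is not a rotation.

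Second, I would give the generic statement. Whenever $\mathcal{M}$ is non-convex, there exist $\bm{\mu}_1,\bm{\mu}_2\in\mathcal{M}$ and $t\in(0,1)$ with $t\bm{\mu}_1+(1-t)\bm{\mu}_2\notin\mathcal{M}$. If $\mathcal{M}$ is a lower-dimensional embedded submanifold (measure zero in the ambient space), then for fixed distinct means the set of weight vectors $\omega$ in the open simplex that map into $\mathcal{M}$ is a proper closed subset. I expect it to have empty interior, and I would argue this via transversality: the affine image of the simplex meets $\mathcal{M}$ in a set of lower dimension unless that image lies inside $\mathcal{M}$, which would contradict non-convexity along that segment. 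Therefore, for almost every mixing weight the MSE output lies in $\mathrm{Conv}(\mathcal{M})\setminus\mathcal{M}$. This is the mode-averaging failure.

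Making the "almost every" claim rigorous for a general curved $\mathcal{M}$ is the delicate part. If it proves unwieldy, I would fall back to the explicit $SO(3)$ example together with the measure-zero argument for codimension $\ge 1$.
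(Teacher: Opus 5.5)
Your Steps 1 and 2 are the paper's argument. The paper writes $\Phi^*(\vs)=\int\va\,P(\va|\vs)\,d\va=\sum_k\omega_k(\vs)\bm{\mu}_k$ and reads off membership in $\mathrm{Conv}(\mathcal{M})$; your bias--variance expansion only makes the first equality explicit. You diverge in Step 3. The paper does not argue genericity. It puts the non-convexity in joint limits and obstacles, notes $\mathrm{Conv}(\mathcal{M})\setminus\mathcal{M}\neq\emptyset$, and then asserts, for two modes separated by an obstacle at distance $\delta$, the margin $\min_{\va\in\mathcal{M}}\|\Phi^*(\vs)-\va\|\ge\min_k(1-\omega_k(\vs))\,\delta/2$. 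It never derives this bound from a precise separation hypothesis. Your $SO(3)$ witness instead uses the curvature of the rotation group, and it can actually be checked: if $R_1^\top R_2$ is a rotation by $\theta\neq0$, then $\det\bigl(\tfrac12(R_1+R_2)\bigr)=\cos^2(\theta/2)<1$. The trade-off is that your witness depends on the rotation-matrix embedding. In a convex chart such as axis--angle on the ball of radius $\pi$, the average is again a valid rotation. The paper's obstacle route places the non-convexity in the translational coordinates, which carry a canonical linear structure, but it pays for this with an unproved margin. Either way, your witness already establishes the qualitative claim that the MSE output need not lie on $\mathcal{M}$.

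The step that would fail is your ``almost every weight'' argument, and the fallback does not rescue it.

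First, non-convexity of $\mathcal{M}$ is a global property, whereas the $\bm{\mu}_k$ are fixed by $P(\va|\vs)$. Their convex hull can lie entirely in $\mathcal{M}$, so you cannot even conclude that the bad set of weights is proper.

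Second, the transversality claim is false for smooth $\mathcal{M}$. Take $\bm{\mu}_1=(0,0)$, $\bm{\mu}_2=(1,0)$, and let $\mathcal{M}\subset\mathbb{R}^2$ be a smooth embedded curve that contains $[0,\tfrac12]\times\{0\}$ and whose only other point on $[0,1]\times\{0\}$ is $\bm{\mu}_2$. Then $t\bm{\mu}_1+(1-t)\bm{\mu}_2\in\mathcal{M}$ for every $t\in[\tfrac12,1)$. This set has nonempty interior, yet the segment is not contained in $\mathcal{M}$.

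Third, the measure-zero fallback does not transfer. The affine hull of the modes is itself Lebesgue-null in the ambient space, so nullity of $\mathcal{M}$ says nothing about its trace on the simplex. The generic statement is therefore false for general curved $\mathcal{M}$, not merely delicate.

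The repair is to exploit the structure your witness already uses. Every rotation lies on the Frobenius sphere $\|R\|_F=\sqrt3$. By strict convexity of the Euclidean norm, $\bar R=\sum_k\omega_kR_k$ satisfies $\|\bar R\|_F<\sqrt3$ whenever two distinct rotations carry positive weight. Hence $\bar R\notin SO(3)$ for \emph{every} weight vector in the open simplex, with the explicit margin $\mathrm{dist}_F(\bar R,SO(3))\ge\sqrt3-\|\bar R\|_F>0$. For two modes, $\|tR_1+(1-t)R_2\|_F^2=3-4t(1-t)(1-\cos\theta)$. This gives a derived counterpart of the paper's $\delta/2$ bound. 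For $SE(3)^T$ it requires the modes to differ in rotation: modes that differ only in translation average to a valid pose, and any failure there must come from the obstacle and joint-limit constraints the paper appeals to.
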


\textbf{Proof.}
Consider a multi-modal action distribution modeled as a Gaussian Mixture: $P(\va|\vs) = \sum_{k=1}^K \omega_k(\vs) \mathcal{N}(\va; \bm{\mu}_k, \bm{\Sigma}_k)$, where $\bm{\mu}_k \in \mathcal{M}$ are distinct valid action centroids. The standard regression objective minimizes $\mathcal{L}(\Phi) = \mathbb{E}_{\vs}\mathbb{E}_{\va \sim P(\va|\vs)}[\|\Phi(\vs) - \va\|^2]$. The optimal estimator is the conditional expectation:
\begin{equation}
    \Phi^*(\vs) = \int \va P(\va|\vs) d\va = \sum_{k=1}^K \omega_k(\vs)\, \bm{\mu}_k.
\end{equation}
By definition, $\Phi^*(\vs)$ is a convex combination of valid points and thus $\Phi^*(\vs) \in \text{Conv}(\mathcal{M})$.

However, robotic action manifolds are strictly non-convex due to kinematic limits and obstacle-induced "barriers" $\mathcal{B} = \mathbb{R}^N \setminus \mathcal{M}$. For any non-convex set, $\text{Conv}(\mathcal{M}) \setminus \mathcal{M} \neq \emptyset$. Specifically, if two modes $\bm{\mu}_i, \bm{\mu}_j$ represent trajectories separated by an obstacle with distance $\delta$, the projection of the estimator onto the valid manifold satisfies:
\begin{equation}
   \min_{\va \in \mathcal{M}} \| \Phi^*(\vs) - \va \| \ge \min_{k} (1-\omega_k(\vs))\, \frac{\delta}{2} > 0.
\end{equation}
This implies $\Phi^*(\vs) \in \mathcal{B}$, meaning the regression result collapses into high-energy forbidden zones (\emph{e.g.,} collisions) rather than staying on the manifold surface. \method{} resolves this by using discrete tokens to select a single mode before performing continuous refinement. \qed

\begin{lemma}[Statistical Mismatch and Lipschitz Sensitivity]
\label{lemma:statistical-mismatch}
Let the semantic space $\mathcal{X}$ be statistically isotropic ($G_\mathcal{X} \approx I$) and the action manifold $\mathcal{Y}$ be highly anisotropic ($G_\mathcal{Y} \neq I$). A direct mapping $\Phi: \mathcal{X} \to \mathcal{Y}$ requires a Lipschitz constant $\text{Lip}(\Phi)$ proportional to the square root of the condition number $\kappa(G_\mathcal{Y})$, leading to gradient instability.
\end{lemma}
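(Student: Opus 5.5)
We work in the same local Gaussian chart model used in Lemma~\ref{lemma:local_closure} and Lemma~\ref{lemma:whiten_iso}. Within chart $z_k$ the action metric is $G_\mathcal{Y}\approx\bm{\Sigma}_k^{-1}$, with eigen-decomposition $\bm{\Sigma}_k=U\Lambda U^\top$ and $\Lambda=\mathrm{diag}(\lambda_1\ge\dots\ge\lambda_d>0)$. Since $G_\mathcal{X}\approx I$, Lemma~\ref{lemma:whiten_iso} gives an approximately isometric map $\tilde\Phi$ from $\mathcal{X}$ onto the whitened chart $\psi_k(\mathcal{Y}_k)$, with $\mathbf{J}_{\tilde\Phi}\mathbf{J}_{\tilde\Phi}^\top\propto I$. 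The idea is to write any direct map $\Phi:\mathcal{X}\to\mathcal{Y}$ that transports $\mu$ to $\nu$ on this chart as the composition $\Phi=\psi_k^{-1}\circ\tilde\Phi$, with $\psi_k^{-1}(\vu)=\bm{\mu}_k+\bm{\Sigma}_k^{1/2}\vu$. Its Jacobian is then $\mathbf{J}_\Phi=\bm{\Sigma}_k^{1/2}\mathbf{J}_{\tilde\Phi}$.

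The key steps are as follows.

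\textbf{Step 1: bound the Jacobian.} Because $\mathbf{J}_{\tilde\Phi}$ is (up to a scalar $c$) orthogonal, the singular values of $\mathbf{J}_\Phi$ are $c\sqrt{\lambda_i}$. Hence
\begin{equation}
\sigma_{\max}(\mathbf{J}_\Phi)/\sigma_{\min}(\mathbf{J}_\Phi)=\sqrt{\lambda_1/\lambda_d}=\sqrt{\kappa(\bm{\Sigma}_k)}=\sqrt{\kappa(G_\mathcal{Y})}.
\end{equation}
The last equality holds because inverting a matrix leaves its condition number unchanged.

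\textbf{Step 2: fix the scale and get the Lipschitz constant.} We normalize $c$ by requiring that $\Phi$ not contract the least-varying action direction, i.e. $\sigma_{\min}(\mathbf{J}_\Phi)\ge 1$. This is the requirement that semantic distinctions of unit size remain resolvable along every action axis. Alternatively, we normalize by total variance, $\mathrm{tr}\,\bm{\Sigma}_k$. With either normalization, $\mathrm{Lip}(\Phi)\ge\sup\|\mathbf{J}_\Phi\|_2=\sigma_{\max}\propto\sqrt{\kappa(G_\mathcal{Y})}$. Equality holds for the composed map, which gives the stated proportionality.

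\textbf{Step 3: rule out cheaper maps.} To show that no other map does better, we use a measure-theoretic argument. Any $\Phi$ with $\Phi_\#\mu=\nu$ must stretch an isotropic ball of radius $r$ into a set covering an ellipsoid with semi-axes $\propto\sqrt{\lambda_i}\,r$. The ratio of the largest to the smallest stretch is therefore at least $\sqrt{\kappa}$. Concretely, we compare covariances: $\mathrm{Cov}(\Phi(x))\approx\mathbf{J}\mathbf{J}^\top=\bm{\Sigma}_k$ to first order.

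\textbf{Step 4: gradient instability.} For a loss $\mathcal{L}(\Phi(x))$ measured in the isotropic Euclidean metric, the chain rule gives $\nabla_x\mathcal{L}=\mathbf{J}_\Phi^\top\nabla\mathcal{L}$. The Hessian then scales by $\mathbf{J}_\Phi^\top\mathbf{J}_\Phi$, whose condition number is $\kappa(G_\mathcal{Y})$. The largest stable step size scales as $1/\sigma_{\max}^2$, while progress along the short axis scales as $\sigma_{\min}^2$. The iteration count therefore grows like $\kappa$, i.e. the gradients are ill-conditioned.

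The main obstacle is Step 3: making the lower bound valid for \emph{every} transport map rather than only the whitened composition. Without a normalization, $\mathrm{Lip}(\Phi)$ is only defined up to a global scale, so the claim really concerns the ratio of the maximal to the minimal local stretch. My first attempt would be the linearized covariance argument above. In it I would state explicitly the assumptions of local linearity and Gaussianity on the chart, and the normalization $\sigma_{\min}\ge 1$. Under these assumptions, the claim follows from the identity $\mathbf{J}\mathbf{J}^\top=\bm{\Sigma}_k$, which holds for any map pushing $\mathcal{N}(0,I)$ to $\mathcal{N}(\bm{\mu}_k,\bm{\Sigma}_k)$ to first order.
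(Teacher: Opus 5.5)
This is essentially the paper's argument. Both rest on the first-order identity $\mathbf{J}_\Phi\mathbf{J}_\Phi^\top\approx\bm{\Sigma}_\mathcal{Y}$: you derive it by factoring $\Phi=\psi_k^{-1}\circ\tilde\Phi$, while the paper simply cites it. Both then read off singular values $\sqrt{\lambda_i}$ to get the stretch ratio $\sqrt{\kappa(G_\mathcal{Y})}$ and take the Hessian conditioning from $\kappa(\mathbf{J}_\Phi^\top\mathbf{J}_\Phi)=\kappa(G_\mathcal{Y})$. Your convention $\sigma_{\min}(\mathbf{J}_\Phi)\ge 1$ makes precise the scale the paper leaves inside ``$\propto$''.

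Drop the trace alternative, though. Fixing $c^2\,\mathrm{tr}\,\bm{\Sigma}_k$ gives $\sigma_{\max}=c\sqrt{\lambda_{\max}}\le c\sqrt{\mathrm{tr}\,\bm{\Sigma}_k}$, which is bounded independently of $\kappa$; the $\sqrt{\kappa}$ growth then sits in $1/\sigma_{\min}=\mathrm{Lip}(\Phi^{-1})$, not in $\mathrm{Lip}(\Phi)$.

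Also, the obstacle you flagged in Step~3 can be removed without any linearization. Take any $L$-Lipschitz $\Phi$ pushing $\mathcal{N}(0,I)$ to $\mathcal{N}(\bm{\mu}_k,\bm{\Sigma}_k)$, and let $u_1$ be the top eigenvector of $\bm{\Sigma}_k$. The Gaussian Poincar\'e inequality applied to $u_1^\top\Phi$ gives $\lambda_{\max}=\mathrm{Var}\,u_1^\top\Phi(X)\le L^2$ for $X\sim\mathcal{N}(0,I)$. That is, $L\ge\sqrt{\kappa}$ in units where $\lambda_{\min}=1$.
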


\textbf{Proof.}
Modern vision-language backbones typically normalize features onto a hypersphere via Layer Normalization, rendering the semantic space $\mathcal{X}$ statistically isotropic with covariance $\Sigma_\mathcal{X} \approx I$. In contrast, the action manifold $\mathcal{Y}$ is governed by physical constraints, resulting in a highly anisotropic covariance $\Sigma_\mathcal{Y}$ where variances along task-relevant axes ($\lambda_{\min}$) are significantly smaller than along task-redundant axes ($\lambda_{\max}$).

Let $\vu, \vv \in \mathcal{X}$ be two semantic vectors separated by $\delta = \|\vu - \vv\|_2$. For a mapping $\Phi$ to transform the isotropic distribution in $\mathcal{X}$ to the anisotropic target in $\mathcal{Y}$ (equipped with metric tensor $G_\mathcal{Y} = \Sigma_\mathcal{Y}^{-1}$), the Jacobian $\mathbf{J}_\Phi = \frac{\partial \Phi}{\partial \vu}$ must locally satisfy $\mathbf{J}_\Phi \mathbf{J}_\Phi^\top \approx \Sigma_\mathcal{Y}$~\citep{arvanitidis2017latent}. Using the eigendecomposition $\Sigma_\mathcal{Y} = U \Lambda U^\top$, the Lipschitz constant $\text{Lip}(\Phi)$ is bounded by the maximum stretching required to cover the manifold's high-variance dimensions relative to its constraints:
\begin{equation}
    \text{Lip}(\Phi) = \sup \frac{d_\mathcal{Y}(\Phi(\vu), \Phi(\vv))}{\|\vu - \vv\|_2} \propto \sqrt{\frac{\lambda_{\max}}{\lambda_{\min}}} = \sqrt{\kappa(\Sigma_\mathcal{Y})} = \sqrt{\kappa(G_\mathcal{Y})}.
\end{equation}

Optimization stability is governed by the condition number of the loss Hessian, $\kappa(\nabla^2 \mathcal{L}) \approx \kappa(\mathbf{J}_\Phi^\top \mathbf{J}_\Phi) = \kappa(\Sigma_\mathcal{Y})$. Since robotic action manifolds often exhibit $\kappa(\Sigma_\mathcal{Y}) \gg 10^3$, $\Phi$ must be highly non-smooth. This ill-conditioned landscape leads to gradient explosion or vanishing, necessitating structural alignment (\emph{e.g.,} whitening) provided by \method{}. \qed

\begin{lemma}[Topological Complexity Collapse]
\label{lemma:topological-collapse}
Partitioning the action manifold $\mathcal{M}$ into $K$ local charts indexed by discrete intents reduces the search complexity (metric entropy) from a linear dependence on the ambient dimension $N$ to a linear dependence on the intrinsic dimension $d$, where $d \ll N$.
\end{lemma}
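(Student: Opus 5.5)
The plan is to make the statement quantitative through covering numbers. We measure search complexity by the metric entropy $\log \mathcal{N}(\mathcal{S},\epsilon)$, the logarithm of the minimal number of $\epsilon$-balls needed to cover the set $\mathcal{S}$ the policy must search over. We then compare two cases: the unstructured regression target, which is a bounded region of the ambient space $\mathbb{R}^N$ (for our representation $N = T_c\cdot 6$ after Global Topological Linearization), and the factorized target of Lemma~\ref{lemma:local_closure}, where a discrete intent $z=k$ is chosen first and refinement happens only inside chart $\mathcal{Y}_k$.

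\textbf{Step 1: ambient bound.} Without structure, a regressor must in principle resolve the bounded box $[-R,R]^N$ containing $\mathcal{M}$. The standard volumetric argument gives
\[
\log \mathcal{N}([-R,R]^N,\epsilon) \asymp N \log (R/\epsilon).
\]
This bound is linear in $N$. Lemma~\ref{lemma:mode-averaging} is the reason the ambient set cannot be replaced by $\mathcal{M}$ here: an MSE estimator lands in $\mathrm{Conv}(\mathcal{M})$, which is typically full-dimensional. So the effective search region really is $N$-dimensional.

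\textbf{Step 2: chart bound.} Assume $\mathcal{M}$ is a compact $d$-dimensional submanifold with bounded curvature. Also assume each chart $\mathcal{Y}_k$ is a geodesically convex patch of diameter at most $r$. This assumption is supported by the Gaussian-component structure in Lemma~\ref{lemma:local_closure} and by the tangent-space linearization of Lemma~\ref{lemma:intra-cluster-additivity}. Under these assumptions each $\mathcal{Y}_k$ is bi-Lipschitz to a ball in $\mathbb{R}^d$, so
\[
\log \mathcal{N}(\mathcal{Y}_k,\epsilon) \le d\log(C r/\epsilon).
\]
A union bound over the $K$ charts then gives
\[
\log\mathcal{N}(\mathcal{M},\epsilon) \le \log K + d\log(Cr/\epsilon).
\]
Once $z$ has been predicted by the classification head, the continuous search cost is only the second term, which is linear in $d$. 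The $\log K$ term is paid once, by a convex cross-entropy problem, and does not scale with $\epsilon$. After whitening (Lemma~\ref{lemma:whiten_iso}), the local chart is near-isotropic, so the constant $C$ does not inflate by $\sqrt{\kappa}$ as it would under Lemma~\ref{lemma:statistical-mismatch}.

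\textbf{Step 3: comparison.} Taking the ratio of the two entropies as $\epsilon\to 0$ gives the collapse from order $N$ to order $d$.

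\textbf{Main obstacle.} The hardest step is Step 2's claim that the charts are genuinely $d$-dimensional with uniformly controlled constants, i.e., bounded reach of $\mathcal{M}$ and chart radius below the injectivity radius of the $SE(3)$ logarithm. The covariances $\bm{\Sigma}_k$ are full-rank in $\mathbb{R}^N$, so "intrinsic dimension $d$" has to be read as $\bm{\Sigma}_k$ having $d$ dominant eigenvalues, with the remaining variance below $\epsilon^2$. My first attempt would make this precise by covering the $\epsilon$-tube around a $d$-dimensional principal subspace. The ignored $N-d$ directions would contribute only $O(1)$ entropy at scale $\epsilon$.
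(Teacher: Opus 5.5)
Your proposal is correct and is essentially the paper's own proof: both measure complexity by log-covering numbers, bound the unstructured case by the ambient entropy $N\log(R/\epsilon)$, and bound the intent-factorized case by $\log K + \max_k \log\mathcal{N}(\epsilon,\mathcal{U}_k)\approx \log K + d\log(r/\epsilon)$ using $K$ locally $d$-dimensional charts. Your additions---the explicit compactness, bounded-curvature and bi-Lipschitz chart assumptions, the union bound that produces the $\log K+\max_k$ form, and the appeal to Lemma~\ref{lemma:mode-averaging} to explain why the unstructured search region is genuinely $N$-dimensional---only make explicit what the paper leaves inside its ``$\approx$'' steps.
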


\textbf{Proof.}
We measure complexity using the \textit{log-covering number} $\log \mathcal{N}(\epsilon, \Omega)$, which scales with the number of $\epsilon$-balls needed to cover $\Omega$.

Without a hybrid architecture, a model approximates the manifold $\mathcal{M}$ within the high-dimensional ambient space $\mathbb{R}^N$ (where $N = T \cdot d$, i.e., horizon times per-step action dimensionality). The metric entropy scales as:
\begin{equation}
    \log \mathcal{N}_{global} \approx N \log(R/\epsilon),
\end{equation}
where $R$ is the manifold diameter. This reflects the "curse of dimensionality" where search complexity grows linearly with the observation dimension $N$.

\method{} partitions $\mathcal{M}$ into $K$ local charts $\{\mathcal{U}_k\}_{k=1}^K$. Each chart is locally diffeomorphic to a low-dimensional Euclidean space $\mathbb{R}^d$, representing the intrinsic motion parameters. The total complexity of the \method{} framework is:
\begin{equation}
    \mathcal{C}_{\method} = \underbrace{\log K}_{\text{Intent Selection}} + \max_{k} \underbrace{\log \mathcal{N}(\epsilon, \mathcal{U}_k)}_{\text{Refinement}}.
\end{equation}
Since refinement occurs in the intrinsic space (\emph{e.g.,} Lie-algebraic tangent space):
\begin{equation}
    \log \mathcal{N}(\epsilon, \mathcal{U}_k) \approx d \log(r/\epsilon),
\end{equation}
where $r \ll R$ is the local cluster radius. The total complexity $\mathcal{C}_{\method} \approx \log K + d \log(r/\epsilon)$ signifies a \textbf{collapse} from $N$-dependence to $d$-dependence. This justifies the efficiency of anchoring the diffusion head to discrete predicted tokens. \qed

\begin{lemma}[Intra-Cluster Additivity of Lie-Algebraic B-Spline Representation]
\label{lemma:intra-cluster-additivity}
Consider a cluster (local chart) on $SE(3)$ where all relative motions w.r.t.\ a reference pose stay within a small neighborhood such that the logarithmic map is well-defined.
Let $\bm{\xi}_t=\log(P_1^{-1}P_t)^\vee \in \mathbb{R}^6$ be the Lie-algebraic coordinates in Eq.~\eqref{eq:log}, and let the B-spline control-point representation be $\vz=\mathcal{A}\bm{\xi}$ with $\mathcal{A}=(\bm{\Phi}^\top\bm{\Phi}+\lambda \mathbf{I})^{-1}\bm{\Phi}^\top$ in Eq.~\eqref{eq:bspline_fitting}.
Then, within the cluster, the representation is additive up to second-order Lie linearization error:
(i) for small motions, group composition corresponds to tangent-space addition with $\mathcal{O}(\varepsilon^2)$ error;
(ii) the B-spline abstraction is a linear map and thus preserves additivity exactly in the tangent space.
\end{lemma}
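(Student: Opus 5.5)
The argument splits along the two claims of Lemma~\ref{lemma:intra-cluster-additivity}. Part (i) is a purely Lie-theoretic statement and is handled with the Baker--Campbell--Hausdorff (BCH) formula on $\mathfrak{se}(3)$. Part (ii) is linear algebra on the ridge-regression operator $\mathcal{A}$ from Eq.~\eqref{eq:bspline_fitting}.

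\textbf{Part (i).} Fix the cluster radius $\varepsilon$, so that every relative pose $X=P_1^{-1}P_t$ in the cluster satisfies $\|\log(X)^\vee\|\le\varepsilon$. We also assume $\varepsilon$ is below the injectivity radius, which keeps us away from the cut locus where the rotation angle approaches $\pi$. On this ball, $\exp$ and $\log$ are mutually inverse smooth diffeomorphisms.

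Take two small motions $X=\exp(\bm{\xi}^\wedge)$ and $Y=\exp(\bm{\eta}^\wedge)$. BCH gives
\begin{equation}
\log(XY)^\vee=\bm{\xi}+\bm{\eta}+\tfrac12[\bm{\xi}^\wedge,\bm{\eta}^\wedge]^\vee+\mathcal{O}(\varepsilon^3).
\end{equation}
The bracket on $\mathfrak{se}(3)$ is bilinear. In coordinates it is $\mathrm{ad}_{\bm{\xi}}\bm{\eta}$, and the $\mathfrak{se}(3)$ adjoint has operator norm at most $c\|\bm{\xi}\|$ for a fixed constant $c$. The bracket term is therefore bounded by $c\,\|\bm{\xi}\|\,\|\bm{\eta}\|\le c\,\varepsilon^2$. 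This yields $\log(XY)^\vee=\bm{\xi}+\bm{\eta}+\mathcal{O}(\varepsilon^2)$.

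We then apply this frame by frame. The per-frame coordinates $\bm{\xi}_t$ of a composed trajectory equal the sum of the per-frame coordinates of the two factors, up to an $\mathcal{O}(\varepsilon^2)$ error in each row. Over a fixed horizon $T$ this stacks into an $\mathcal{O}(\varepsilon^2)$ error in Frobenius norm, with a $\sqrt{T}$ factor absorbed into the constant.

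\textbf{Part (ii).} Here the argument is exact. Since $\bm{\Phi}^\top\bm{\Phi}$ is positive semidefinite and $\lambda>0$, the matrix $\bm{\Phi}^\top\bm{\Phi}+\lambda\mathbf{I}$ is invertible. So $\mathcal{A}$ is a well-defined fixed matrix that acts column-wise on $\bm{\xi}\in\mathbb{R}^{T\times 6}$. Linearity then gives $\mathcal{A}(\bm{\xi}+\bm{\eta})=\mathcal{A}\bm{\xi}+\mathcal{A}\bm{\eta}$, and additivity is preserved exactly in the tangent space.

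\textbf{Combining (i) and (ii).} We have
\begin{equation}
\mathcal{A}\,\bm{\xi}^{XY}=\mathcal{A}\bm{\xi}^X+\mathcal{A}\bm{\xi}^Y+\mathcal{A}\,\mathcal{O}(\varepsilon^2).
\end{equation}
The operator norm satisfies $\|\mathcal{A}\|_2\le\sigma_{\max}(\bm{\Phi})/\lambda$. It is also at most $1/\sigma_{\min}(\bm{\Phi})$ when $\bm{\Phi}$ has full column rank, as it does for uniform B-splines with $T\ge T_c$. Either bound is independent of $\varepsilon$, so the control points inherit the same second-order error.

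\textbf{Main obstacle.} The delicate step is making ``small neighborhood'' precise enough for the BCH bound to be uniform. Two issues need care. First, the BCH series must converge: on $\mathfrak{se}(3)$ this holds when $\|\bm{\xi}\|+\|\bm{\eta}\|$ is below a fixed constant, and I would simply impose this as the cluster assumption. Second, the translational part of $\mathfrak{se}(3)$ is not bounded by rotation alone, so $\varepsilon$ must bound the full six-vector norm. In the composed trajectory $P_1^{-1}P_t$, each factor must therefore also lie within radius $\varepsilon$ of the identity.

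I would state these as hypotheses, matching the $\ge 30$\,Hz remark in the Impact Statement. I would not attempt a sharper constant.
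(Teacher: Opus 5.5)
Your proposal is correct and follows the paper's proof essentially step for step. For (i) you use BCH with the bilinear bound $\|[\bm{\xi}^\wedge,\bm{\eta}^\wedge]\|\le C\|\bm{\xi}\|\|\bm{\eta}\|$, applied frame by frame, and for (ii) you use exact linearity of the ridge operator $\mathcal{A}$. Your one addition is to state explicitly what the paper leaves implicit: $\|\mathcal{A}\|_2\le\sigma_{\max}(\bm{\Phi})/\lambda$ is an $\varepsilon$-independent constant, so the $\mathcal{O}(\varepsilon^2)$ remainder survives the map. Rely on that bound rather than the $1/\sigma_{\min}(\bm{\Phi})$ one, because with the paper's stated LIBERO hyperparameters $T=8<T_c=16$ and $\bm{\Phi}$ cannot have full column rank.
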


\textbf{Proof.} The overall process consists of two steps:

Step 1: Tangent-space additivity via BCH.
Fix a reference pose $P_1$ (or a cluster-specific reference $\bar{P}_k$) and define relative transforms
$G=P_1^{-1}P \in SE(3)$ with coordinates $\bm{\xi}=\log(G)^\vee \in \mathbb{R}^6$.
Consider two motions within the same cluster neighborhood:
\begin{equation}
    G_a=\exp(\bm{\xi}_a^\wedge),\qquad G_b=\exp(\bm{\xi}_b^\wedge),
\end{equation}
where $\|\bm{\xi}_a\|_2,\|\bm{\xi}_b\|_2 \le \varepsilon$ and $\varepsilon$ is small enough to remain inside the injectivity radius of $\log(\cdot)$.
By the Baker--Campbell--Hausdorff (BCH) formula,
\begin{equation}
\label{eq:bch_add}
    \log(G_aG_b)
    =
    \log\!\big(\exp(\bm{\xi}_a^\wedge)\exp(\bm{\xi}_b^\wedge)\big)
    =
    (\bm{\xi}_a+\bm{\xi}_b)^\wedge
    + \frac{1}{2}[\bm{\xi}_a^\wedge,\bm{\xi}_b^\wedge]
    + \mathcal{O}(\varepsilon^3),
\end{equation}
where $[\cdot,\cdot]$ is the Lie bracket.
Applying $(\cdot)^\vee$ and using $\|[\bm{\xi}_a^\wedge,\bm{\xi}_b^\wedge]\| \le C\|\bm{\xi}_a\|\,\|\bm{\xi}_b\|$ for some constant $C$ in a fixed chart, we obtain
\begin{equation}
\label{eq:add_err}
    \big\|\log(G_aG_b)^\vee - (\bm{\xi}_a+\bm{\xi}_b)\big\|_2
    \le
    C\|\bm{\xi}_a\|_2\|\bm{\xi}_b\|_2 + \mathcal{O}(\varepsilon^3)
    =
    \mathcal{O}(\varepsilon^2).
\end{equation}
Thus, \emph{within a cluster of small motions}, composing two motions in $SE(3)$ is equivalent to adding their tangent vectors up to second-order error.

This extends pointwise to trajectories.
Let a trajectory be represented by relative transforms $\{G_t\}_{t=1}^T$ with $\bm{\xi}_t=\log(G_t)^\vee$.
For two trajectories in the same cluster satisfying $\|\bm{\xi}_t^{(a)}\|_2,\|\bm{\xi}_t^{(b)}\|_2\le\varepsilon$ for all $t$, the pointwise composed motion $G_t^{(a)}G_t^{(b)}$ satisfies
\begin{equation}
\label{eq:traj_add}
    \log\!\big(G_t^{(a)}G_t^{(b)}\big)^\vee
    =
    \bm{\xi}_t^{(a)}+\bm{\xi}_t^{(b)}+\mathcal{O}(\varepsilon^2),\qquad \forall t.
\end{equation}
Hence the Lie-algebraic trajectory coordinates are \emph{cluster-wise additive} up to $\mathcal{O}(\varepsilon^2)$.

Step 2: Exact additivity preserved by B-spline abstraction.
Vectorize the tangent trajectory as $\bm{\xi}=[\bm{\xi}_1;\ldots;\bm{\xi}_T]\in\mathbb{R}^{6T}$.
The ridge-regression solution in Eq.~\eqref{eq:bspline_fitting} defines a linear operator
\begin{equation}
    \vz = \mathcal{A}\bm{\xi},
    \qquad
    \mathcal{A} = (\bm{\Phi}^\top\bm{\Phi}+\lambda \mathbf{I})^{-1}\bm{\Phi}^\top,
\end{equation}
where $\bm{\Phi}$ and $\lambda$ are fixed.
Therefore, for any $\bm{\xi}^{(a)},\bm{\xi}^{(b)}$ and scalars $\alpha,\beta$,
\begin{equation}
\label{eq:linear_A}
    \mathcal{A}(\alpha\bm{\xi}^{(a)}+\beta\bm{\xi}^{(b)})
    =
    \alpha\mathcal{A}\bm{\xi}^{(a)}+\beta\mathcal{A}\bm{\xi}^{(b)}.
\end{equation}
In particular, $\mathcal{A}(\bm{\xi}^{(a)}+\bm{\xi}^{(b)})=\mathcal{A}\bm{\xi}^{(a)}+\mathcal{A}\bm{\xi}^{(b)}$, i.e., the B-spline control-point representation is \emph{exactly additive} in the tangent space.

\paragraph{Conclusion.}
Combining Eq.~\eqref{eq:traj_add} and Eq.~\eqref{eq:linear_A}, within each cluster the overall representation
$\va \mapsto \bm{\xi} \mapsto \vz$ is additive up to the second-order Lie linearization error:
\begin{equation}
    \vz^{(a\circ b)}
    =
    \mathcal{A}\bm{\xi}^{(a\circ b)}
    =
    \mathcal{A}\big(\bm{\xi}^{(a)}+\bm{\xi}^{(b)}\big)+\mathcal{O}(\varepsilon^2)
    =
    \vz^{(a)}+\vz^{(b)}+\mathcal{O}(\varepsilon^2).
\end{equation}
This proves the desired intra-cluster additivity of the proposed representation. \qed

\subsection{Dataset and Benchmark Details}
\label{app:data}

\begin{figure}[htbp]
    \centering
    \includegraphics[width=0.55\textwidth]{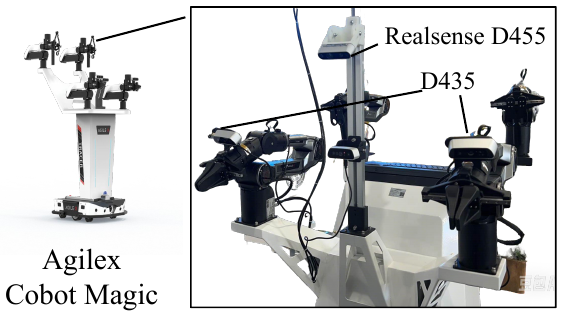}
    \caption{\textbf{Dual-arm data-collection platform.} We use an AgileX \emph{Cobot Magic} base with two collaborative arms. RGB(-D) videos are captured from an overhead Intel RealSense D455 and wrist-mounted Intel RealSense D435i cameras; all streams are time-synchronized with joint-space commands at 30\,Hz.}
    \label{fig:device}
\end{figure}

\begin{table}[htbp]
\centering
\caption{Task objectives and challenges in the real-world benchmark suite.}
\label{tab:tasks_multirow}
\small
\begin{tabular}{c c l}
\toprule
\textbf{Task} & \textbf{Objective} & \textbf{Key Challenges} \\
\midrule
PlaceObj & Place named object into basket. & Instruction grounding, Table-top grasping \\
ZipSeal  & Align edges and close zipper.  & Deformable control, Bimanual coordination \\
TubeRack & Insert tube into rack slot.     & High spatial precision, Tight tolerances \\
\bottomrule
\end{tabular}
\end{table}

\begin{figure}[t]
    \centering
    \includegraphics[width=1.0\textwidth]{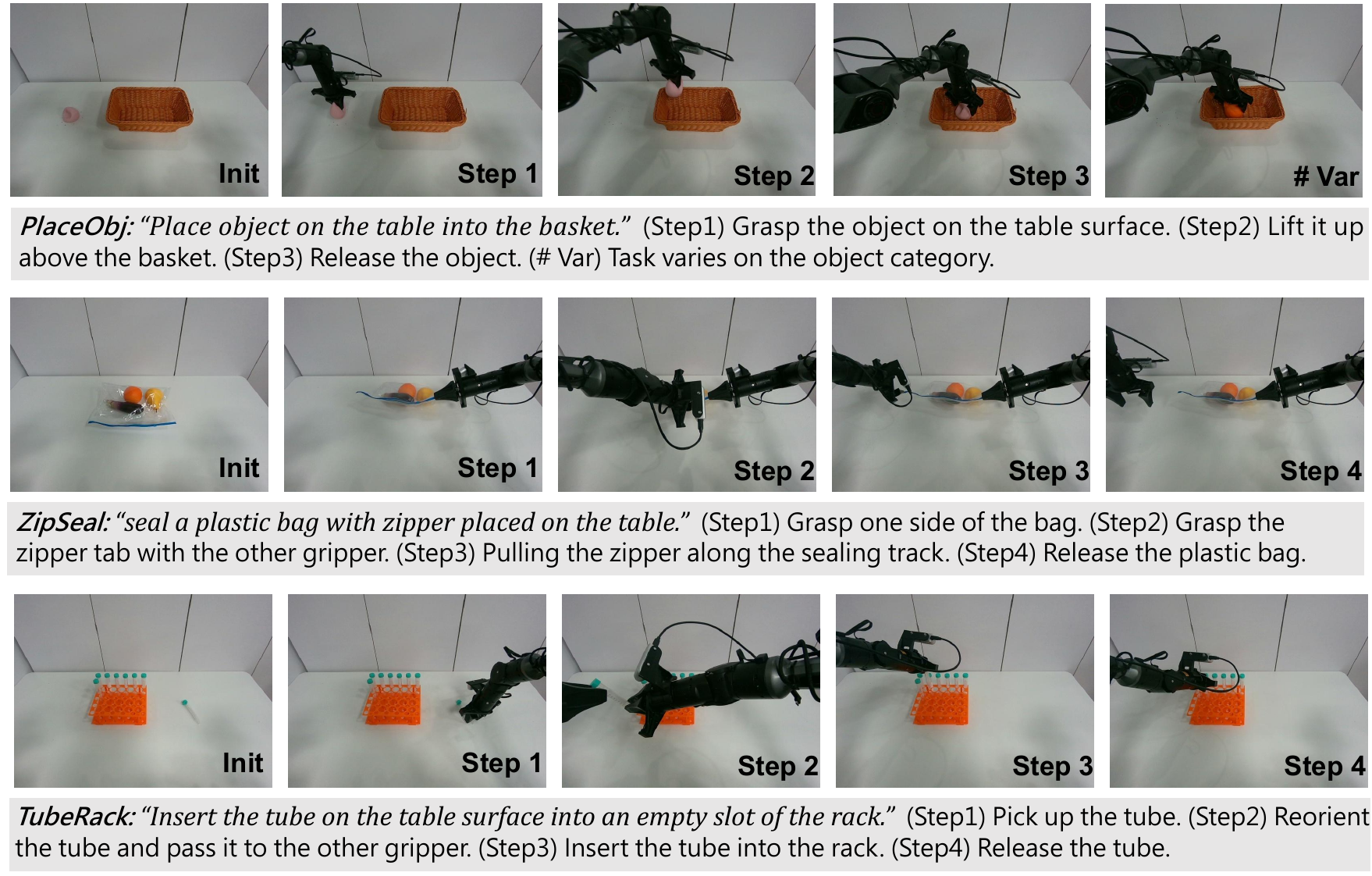}
    \caption{\textbf{Real-world tasks and step-wise rollouts.} \textbf{PlaceObj}: grasp, lift, and place. \textbf{ZipSeal}: bimanual alignment and closing along the track. \textbf{TubeRack}: pick, reorient, and insert with precision.}
    \label{fig:rw_tasks}
\end{figure}

\begin{figure}[t]
    \centering
    \includegraphics[width=1.0\textwidth]{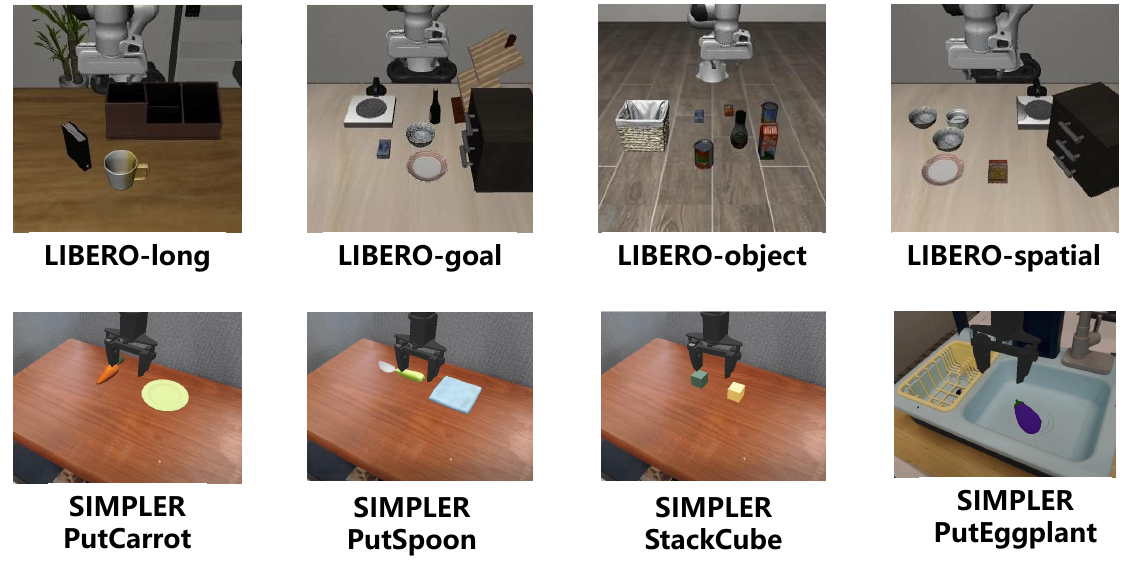}
    \caption{\textbf{Simulation Benchmark Environments.} (Top) The \textbf{LIBERO} suite categorized by long-horizon, goal-conditioned, object-centric, and spatial reasoning tasks. (Bottom) \textbf{SimplerEnv} manipulation tasks (Put Carrot, Put Spoon, Stack Block, Put Eggplant) used to evaluate policy robustness under significant domain shifts.}
    \label{fig:sim_vis}
\end{figure}

\subsubsection{Real-World Benchmarks}
\textbf{Data Collection Platform.} We collect high-quality bimanual demonstrations using an AgileX \emph{Cobot Magic} platform (Fig.~\ref{fig:device}). The hardware includes an overhead Intel RealSense D455 for global observation and two wrist-mounted D435i cameras for local interaction details. All data, including 7-DoF joint commands and multi-view video, are synchronized and recorded at 30\,Hz. The final dataset comprises 200 demonstrations per task, totaling over 600 trajectories with lengths between 400 and 600 frames.

\textbf{PlaceObj: Semantic Grounding.} As illustrated in Fig.~\ref{fig:rw_tasks} (top), this task requires the robot to interpret natural language instructions to pick a specific object from distractors and place it into a basket. According to Table~\ref{tab:tasks_multirow}, the primary challenges involve instruction grounding under visual clutter and maintaining reliable tabletop grasping across varying object categories.

\textbf{ZipSeal: Dexterous Bimanual Control.} This task involves aligning the edges of a resealable bag and pulling a zipper along a predefined track (Fig.~\ref{fig:rw_tasks}, middle). It serves as a benchmark for contact-rich dynamics and coordinated bimanual control. The model must handle deformable objects while maintaining sustained force and pose accuracy during the "hold-and-pull" maneuver.

\textbf{TubeRack: High-Precision Insertion.} The robot must pick up a test tube and insert it into a rack with a tight tolerance of $<2$\,mm (Fig.~\ref{fig:rw_tasks}, bottom). This task demands sub-millimeter spatial precision and strict axial alignment. It provides a rigorous test for the \method{} framework's ability to bridge high-level intent with low-level precision control under kinematic constraints.

\subsubsection{Simulation Benchmarks}
\textbf{LIBERO.} We utilize the LIBERO suite~\citep{liu2024libero} to evaluate generalization across spatial, object, goal, and long-horizon categories (Fig.~\ref{fig:sim_vis}, top). This benchmark assesses the policy's ability to adapt to diverse object instances and spatial layouts.

\textbf{SimplerEnv.} To test robustness under domain shifts, we evaluate on WidowX robot benchmarks within SimplerEnv~\citep{li24simpler} (Fig.~\ref{fig:sim_vis}, bottom). This evaluates the model's performance under variations in lighting, background, and camera perspectives, following standard evaluation protocols for state-of-the-art VLA comparisons.

\subsection{Model Architecture and Training Details}
\label{app:model}

\subsubsection{Backbone Architecture}
We adopt \textbf{Qwen2.5-VL 3B} as our unified multimodal backbone. The model processes visual inputs through a ViT encoder and interprets them alongside natural language instructions within a 3B-parameter Transformer. To capture fine-grained spatial-temporal features, we utilize an un-pooled vision feature map as the context for the action models. The Transformer backbone produces a sequence of hidden states $\bm{H} = \{h_1, \dots, h_L\}$, where each $h_t \in \mathbb{R}^{d_{vlm}}$ encapsulates the multimodal context required for downstream action prediction.

\subsubsection{Hybrid Action Heads: Intent and Flow}
Following the design principles of QwenGR00T~\citep{starvla2025}, we equip the model with two parallel action heads that bridge semantic reasoning with physical execution: a \textbf{Discrete Intent Head} and a \textbf{Continuous DiT Action Flow Head}.

\textbf{Discrete Intent Head (Topological Prior).}
The discrete head serves as a high-level intent predictor. It consists of a linear projection layer that maps the hidden state $h$ to the logit space of the \method{} tokenizer's vocabulary $\mathcal{V}$.

\textbf{(1) Update Strategy.} During training, we optimize the cross-entropy loss between the predicted logits and the ground-truth discrete tokens $\bar{q}^{(l)}$ generated by the \method{} tokenizer:
\begin{equation}
    \mathcal{L}_{dis} = - \sum_{l} \log P(\bar{q}^{(l)} \mid h) = - \sum_{l} \log \Softmax\!\left(\bm{W}_{dis}\, h\right)^{(l)}.
\end{equation}

\textbf{(2) Inference Logic.} During deployment, the head performs autoregressive sampling to determine the optimal action intent $\hat{q}^{(l)} = \arg\max_{q} P(q \mid h)$. This predicted token provides a topologically grounded prior, effectively selecting the local manifold chart for the subsequent continuous refinement.

\textbf{Continuous DiT Action Flow Head (Fine-grained Refinement).}
For continuous action generation, we employ a \textbf{Diffusion Transformer (DiT-B)} as the action flow head. This head takes the hidden state $h$ as a conditioning signal to perform iterative denoising on the action chunk $\va \in \mathbb{R}^{T \times d}$. The DiT head consists of 16 layers with a hidden dimension of 1024, accepting a noisy action $\va_k$, a diffusion timestep $k$, and the conditioning feature $h$ injected via cross-attention.

\textbf{(1) Update Strategy.} The head is trained to predict the noise $\epsilon$ added to the whitened Lie-algebraic actions using a standard denoising objective:
\begin{equation}
    \mathcal{L}_{con} = \mathbb{E}_{\va_0, \epsilon \sim \mathcal{N}(0, \mathbf{I}), k} \left[ \| \epsilon - \epsilon_\theta (\va_k, k, h) \|^2 \right]
\end{equation}
where $\va_k$ is the noisy action at step $k$. This optimization enables the model to recover the exact local manifold structure within the neighborhood of the selected intent.

\textbf{(2) Inference Logic.} We utilize a DDIM scheduler to generate the final continuous action chunk. Starting from pure Gaussian noise $\va_K \sim \mathcal{N}(0, \mathbf{I})$, the model performs iterative refinement:
\begin{equation}
    \va_{k-1} = \Denoise(\va_k, k, h; \theta)
\end{equation}
The resulting continuous trajectory $\va_0$ is subsequently mapped back from the $\mathfrak{se}(3)$ Lie algebra to the $SE(3)$ manifold to produce executable physical commands.

\textbf{Hybrid Optimization Strategy.}
The framework is trained end-to-end using a joint objective that balances discrete intent classification and continuous flow refinement:
\begin{equation}
    \mathcal{L}_{vla} = \mathcal{L}_{dis} + \lambda_2 \mathcal{L}_{con}.
\end{equation}
By setting $\lambda_2 = 10$, the architecture ensures that the backbone's hidden states remain semantically rich while being precisely anchored to the action manifold's requirements.

\begin{table}[htbp]
\centering
\caption{Comprehensive Hyperparameters for \method{}-VLA Training (Qwen2.5-VL-3B + DiT-B).}
\label{tab:detailed_hyperparameters}
\small
\begin{tabular}{ll}
\toprule
\textbf{Category} & \textbf{Parameters and Values} \\
\midrule
\textbf{Architecture} & \\
\quad VLM Backbone & Qwen2.5-VL-3B-Instruct \\
\quad Vision Backbone & Internal ViT (unpooled features) \\
\quad VLM Hidden Dimension & 2048 \\
\quad Action Model Type & DiT-B (16 layers, 1024 dim) \\
\quad Conditioning & Cross-Attention (ada\_norm) \\
\midrule
\textbf{VLA Training} & \\
\quad Max Training Steps & 100,000 \\
\quad Warmup Steps & 5,000 (Ratio: 0.05) \\
\quad Base Learning Rate & $1.0 \times 10^{-5}$ (VLM) / $1.0 \times 10^{-4}$ (DiT) \\
\quad LR Scheduler & Cosine with Min LR ($5.0 \times 10^{-7}$) \\
\quad Optimizer & AdamW ($\beta_1=0.9, \beta_2=0.95, \epsilon=10^{-8}$) \\
\quad Precision & Mixed Precision (BF16) \\
\midrule
\textbf{Action \& Diffusion} & \\
\quad Action Metric & Lie-algebraic $\mathfrak{se}(3)$ (Delta EE) \\
\quad Horizon ($T$) / Dim ($d$) & 16 / 7 \\
\quad Diffusion Steps (Train/Inf) & 4 / 4 (Repeated steps) \\
\quad Noise Schedule & $\beta_\alpha=1.5, \beta_\beta=1.0, s=0.999$ \\
\bottomrule
\end{tabular}
\end{table}

Unless otherwise specified, Table~\ref{tab:detailed_hyperparameters} reports the default SimplerEnv configuration with $T{=}16$ and $d{=}7$, where the first six dimensions correspond to the $\mathfrak{se}(3)$ delta end-effector motion and the last dimension denotes the gripper command. The LIBERO and real-world settings instead use $T{=}8$ and $T{=}30$ respectively, while keeping $d{=}7$.

\subsection{Supplementary Experimental Results}
\label{app:experiments}


\begin{figure}[t]
    \centering
    \includegraphics[width=1.0\linewidth]{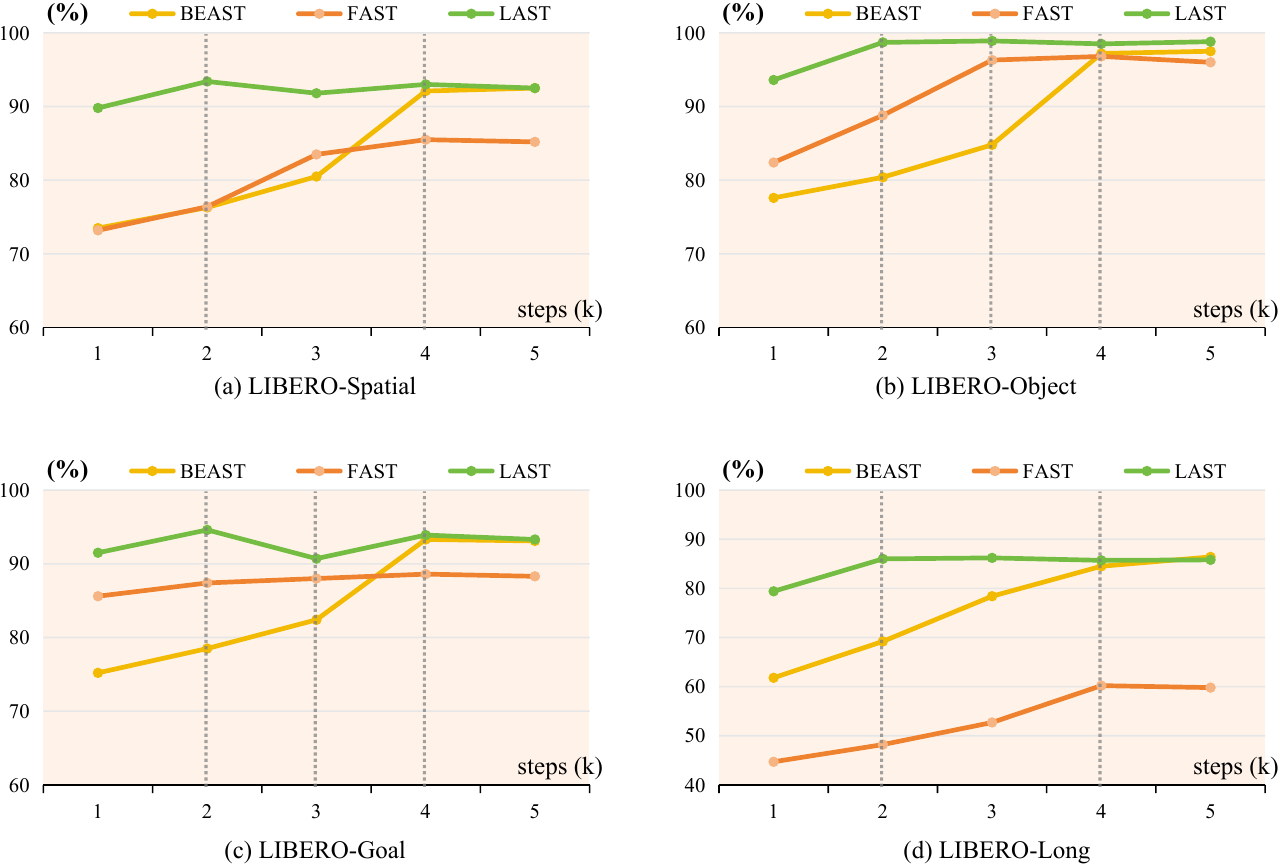}
    \vspace{-6pt}
    \caption{\textbf{Training Convergence on LIBERO Benchmarks.} Comparison of success rate progression between \method{} and baselines across four task suites.}
    \label{fig:libero_tc_sub}
    \vspace{-12pt}
\end{figure}

\paragraph{Training Efficiency and Manifold Alignment.}
We visualize the training convergence across the four LIBERO task suites in Figure~\ref{fig:libero_tc_sub}. The results highlight a critical advantage of the \method{} framework: \textbf{rapid topological alignment}. As observed, \method{} (green curve) achieves near-optimal performance within the first 1,000 to 2,000 training steps, significantly outpacing geometry-agnostic baselines like FAST and BEAST. This acceleration is particularly pronounced in \textbf{LIBERO-Long} (Fig.~\ref{fig:libero_tc_sub}(d)), a horizon-heavy suite requiring sequential consistency. While baselines struggle to learn the long-horizon dependency from scratch via direct regression, \method{} leverages its pre-learned discrete schemas. Since the intrinsic geometric ``shapes'' of the actions are already encoded in the tokenizer, the VLA policy need only learn the high-level semantic mapping (\textit{i.e.}, which schema to select) rather than simultaneously learning perception, planning, and low-level control dynamics. This effectively reduces the sample complexity of the downstream task from trajectory regression to intent classification.


\subsubsection{Ablation on Commitment Loss Hyperparameter $\beta$}

The hyperparameter $\beta$ in $\mathcal{L}_{com}$ balances the encoder's output stability and the codebook's learning rate. We evaluate its impact on reconstruction fidelity and codebook health. As shown in Table~\ref{tab:beta_ablation}, we observe a clear trade-off:
\begin{itemize}
    \item \textbf{Under-regularization ($\beta < 0.5$):} With low $\beta$ values (0.1, 0.25), the encoder outputs fluctuate loosely around the codebook vectors. While this allows for high freedom, it results in a non-stationary latent target for the VLA, leading to higher reconstruction errors (MAE $0.72\text{e-}2$).
    \item \textbf{Over-regularization ($\beta > 0.5$):} Conversely, high $\beta$ values (2.0) excessively penalize the encoder, forcing it to collapse to a subset of ``safe'' codes. This reduces the Codebook Usage (CU) to 81\% and increases MAE ($0.78\text{e-}2$) as the model loses the expressivity required to capture fine-grained geometric details.
\end{itemize}
The default setting of $\beta=0.5$ achieves the optimal balance, yielding the lowest reconstruction error ($0.60\text{e-}2$) and maximizing codebook utilization (96\%), ensuring a rich yet stable vocabulary for the downstream policy.

\begin{table}[h]
\centering
\small
\caption{Sensitivity analysis of commitment loss weight $\beta$ on Bridge validation set. We observe that $\beta=0.5$ strikes the best balance between reconstruction quality (MAE) and codebook utilization (CU).}
\label{tab:beta_ablation}
\vspace{4pt}
\begin{tabular}{l|ccccc}
\toprule
Weight $\beta$ & 0.1 & 0.25 & \textbf{0.5 (Default)} & 1.0 & 2.0 \\
\midrule
MAE ($\downarrow$) & 0.72e-2 & 0.64e-2 & \textbf{0.60e-2} & 0.65e-2 & 0.78e-2 \\
CR ($\uparrow$)  & 8.0$\times$ & 8.0$\times$ & \textbf{8.0$\times$} & 8.0$\times$ & 8.0$\times$ \\
CU ($\uparrow$)  & 84\%    & 92\%    & \textbf{96\%}   & 93\%    & 81\%    \\
\bottomrule
\end{tabular}
\end{table}


\subsubsection{Sensitivity Analysis of Codebook Size $K$}

We evaluate the impact of codebook size $K$ on task performance in the SimplerEnv benchmark. $K$ controls the granularity of topological clustering, determining the level of detail in the action representation.

\textbf{Impact of Codebook Size ($K$).} Table~\ref{tab:codebook_sensitivity} shows the effect of codebook size on performance across tasks. With a small codebook ($K=256$), the model suffers from mode collapse, leading to underfitting and poor performance (\emph{e.g.,} Put Spoon 50.0\%, Put Eggplant 66.7\%). On the other hand, a large codebook ($K=4096$) results in sparse clusters, which causes instability in the covariance estimation and whitening process, leading to reduced performance (\emph{e.g.,} 52.1\% average SR). The optimal codebook size of $K=1024$ provides the best balance, yielding the highest success rate (57.3\% average SR).

\begin{table}[h]
\centering
\small
\caption{Success Rate (SR) on SimplerEnv tasks under different codebook sizes $K$.}
\label{tab:codebook_sensitivity}
\vspace{4pt}
\begin{tabular}{l|cccc|c}
\toprule
\textbf{Codebook $K$} & \textbf{Put Spoon} & \textbf{Put Carrot} & \textbf{Stack Block} & \textbf{Put Eggplant} & \textbf{Avg.} \\
\midrule
256 (Small) & 50.0\% & 33.3\% & 20.8\% & 66.7\% & 42.7\% \\
\rowcolor{blue!5} 
\textbf{1024 (Optimal)} & \textbf{62.5\%} & \textbf{41.7\%} & \textbf{33.3\%} & \textbf{91.7\%} & \textbf{57.3\%} \\
4096 (Large) & 54.2\% & 37.5\% & 29.2\% & 87.5\% & 52.1\% \\
\bottomrule
\end{tabular}
\end{table}


\subsubsection{Sensitivity Analysis of VLA Loss Weights $\lambda_1$ and $\lambda_2$}

We evaluate the impact of loss weights on the SimplerEnv benchmark. $\lambda_1$ controls the commitment during tokenizer training, while $\lambda_2$ balances the discrete intent prediction and continuous diffusion refinement during VLA fine-tuning.

\textbf{Impact of Tokenizer Regularization ($\lambda_1$).} Table~\ref{tab:lambda1_simpler} analyzes the impact of the commitment weight $\lambda_1$ during the tokenizer pre-training stage. Since the VLA relies entirely on the learned discrete tokens to form its action plans, the quality of the tokenizer is the bottleneck. Deviation from the optimal $\lambda_1=0.5$ causes significant degradation, particularly in high-precision tasks like \textit{Stack Block}. A low $\lambda_1$ results in noisy tokens, while a high $\lambda_1$ restricts the diversity of the action vocabulary.

\begin{table}[h]
\centering
\small
\caption{Success Rate (SR) on SimplerEnv tasks under different $\lambda_1$ settings (Tokenizer stage).}
\label{tab:lambda1_simpler}
\vspace{4pt}
\begin{tabular}{l|cccc}
\toprule
\textbf{$\lambda_1$ Setting} & \textbf{Put Spoon} & \textbf{Put Carrot} & \textbf{Stack Block} & \textbf{Put Eggplant} \\
\midrule
$\lambda_1 = 0.1$  & 29.2\% & 33.3\% & 12.5\% & 62.5\% \\
$\lambda_1 = 0.25$ & 45.8\% & 41.7\% & 20.8\% & 79.2\% \\
\textbf{$\lambda_1 = 0.5$ (Default)} & \textbf{62.5\%} & 41.7\% & \textbf{33.3\%} & \textbf{91.7\%} \\
$\lambda_1 = 1.0$  & 58.3\% & \textbf{47.2\%} & 12.5\% & 75.0\% \\
$\lambda_1 = 2.0$  & 47.2\% & 37.5\% & 4.2\% & 59.2\% \\
\bottomrule
\end{tabular}
\end{table}

\textbf{Balancing Planning and Execution ($\lambda_2$).} Table~\ref{tab:lambda2_simpler} reveals the necessity of the ``Plan-then-Execute'' synergy. When the continuous loss is under-weighted ($\lambda_2=2.5$), the model correctly identifies intents but fails to ground them physically. Conversely, when the continuous loss dominates ($\lambda_2=40.0$), the optimization suffers from ``mode averaging,'' causing catastrophic failure in multi-modal tasks like \textit{Stack Block} (4.2\%). Setting $\lambda_2=10$ ensures the discrete intent acts as a robust topological anchor while the flow matching head refines local geometry.

\begin{table}[h]
\centering
\small
\caption{Success Rate (SR) on SimplerEnv tasks under different $\lambda_2$ settings (VLA stage).}
\label{tab:lambda2_simpler}
\vspace{4pt}
\begin{tabular}{l|cccc}
\toprule
\textbf{$\lambda_2$ Setting} & \textbf{Put Spoon} & \textbf{Put Carrot} & \textbf{Stack Block} & \textbf{Put Eggplant} \\
\midrule
$\lambda_2 = 2.5$  & 58.3\% & 33.3\% & 20.8\% & 87.5\% \\
$\lambda_2 = 5$  & 47.2\% & 50.0\% & 25.0\% & \textbf{95.8\%} \\
\textbf{$\lambda_2 = 10$ (Default)} &\textbf{62.5\%} & 41.7\% & \textbf{33.3\%} & 91.7\% \\
$\lambda_2 = 20$  & 33.3\% & 45.8\% & 16.7\% & 79.2\% \\
$\lambda_2 = 40$ & 50.0\% & \textbf{54.2\%} & 4.2\% & 70.8\% \\
\bottomrule
\end{tabular}
\end{table}

\subsection{B-Spline Algorithm Details}
\label{app:bspline}

\paragraph{B\mbox{-}spline basis.}
A univariate B\mbox{-}spline of degree $p$ is defined over a nondecreasing knot vector
$\mathcal{U}=\{u_0,\dots,u_{M}\}$ with $M=T_c+p$ for $T_c$ control points.
The $p$-degree basis functions $\{N_{i,p}(u)\}_{i=0}^{T_c-1}$ are given by the Cox--de Boor recursion:
\begin{align}
N_{i,0}(u) &=
\begin{cases}
1, & u_i \le u < u_{i+1},\\
0, & \text{otherwise,}
\end{cases}
\\[-0.25em]
N_{i,p}(u) &= 
\frac{u-u_i}{u_{i+p}-u_i}\,N_{i,p-1}(u)
\;+\;
\frac{u_{i+p+1}-u}{u_{i+p+1}-u_{i+1}}\,N_{i+1,p-1}(u),
\label{eq:cox}
\end{align}
with the convention $0/0:=0$.
We employ \emph{uniform clamped} knots (first and last knots repeated $p{+}1$ times), which ensure interpolation at the endpoints and stable evaluation.

\paragraph{Trajectory model.}
A B\mbox{-}spline trajectory of degree $p$, parameterized over $u \in [0,1]$ and defined by control points $\vc = [c_0, \dots, c_{T_c-1}]^\top$, is given by
\begin{equation}
y(u) \;=\; \sum_{i=0}^{T_c-1} c_i\,N_{i,p}(u), \qquad u\in[0,1].
\label{eq:bspline_sum}
\end{equation}
Given a normalized action sequence $a_{1:T} = [a_1,\dots,a_T]$ with length $T$, the objective is to construct a B\mbox{-}spline trajectory $y(u)$ that approximates the action sequence. A linear transformation maps the action timestep to the parametric space of the spline trajectory, where the sampled grid is defined as
\begin{equation}
u_\tau \;=\; \frac{\tau-1}{T-1}, \qquad \tau=1,\dots,T.
\label{eq:u_grid}
\end{equation}

\paragraph{Design matrix and least squares.}
To make the B\mbox{-}spline trajectory on the sampled grid $y(u)_{1:T}$ closely matches the action sequence $a_{1:T}$, the control points are obtained by minimizing the least\mbox{-}squares error
\begin{equation}
\vc^\star = \arg\min_{\vc}\; \big\|y(u)_{1:T} - a_{1:T}\big\|_2^2.
\end{equation}
We further formalize the precomputed B-spline basis functions of $y(u)$ into the B\mbox{-}spline design matrix $\bm{\Phi}\in\mathbb{R}^{T\times T_c}$, where
\begin{equation}
\Phi_{\tau,i} \;=\; N_{i,p}(u_\tau), \qquad \tau=1,\dots,T,\;\; i=0,\dots,T_c-1.
\label{eq:design}
\end{equation}
Under this formulation, estimating the B-spline trajectory parameters reduces to a standard least-squares optimization problem:
\begin{align}
\label{eq:ls_objective}
\vc^\star
&= \arg\min_{\vc}\; \big\|\bm{\Phi}\vc - a_{1:T}\big\|_2^2,
\quad\text{(unweighted)}\\
\vc^\star
&= \arg\min_{\vc}\; \big\|\bm{W}^{1/2}(\bm{\Phi}\vc - a_{1:T})\big\|_2^2
\quad\text{(weighted)},
\end{align}
where $\bm{W}\succeq 0$ can emphasize specific timestamps (\emph{e.g.,} contacts).

\paragraph{Ridge regularization.}
To improve numerical stability and avoid overfitting when the number of control points $T_c$ is large or the samples are noisy, we introduce an $\ell_{2}$ penalty, parameterized by a regularization coefficient $\lambda$ ($\lambda>0$):
\begin{equation}
\label{eq:ridge}
\vc^\star
= \arg\min_{\vc}\; \big\|\bm{\Phi}\vc - a_{1:T}\big\|_2^2 + \lambda\|\vc\|_2^2
\;=\; (\bm{\Phi}^\top\bm{\Phi} + \lambda \bm{I})^{-1}\bm{\Phi}^\top a_{1:T}.
\end{equation}
In practice we precompute $\bm{\Phi}$ from \eqref{eq:design} and solve \eqref{eq:ridge}
via a Cholesky factorization of $(\bm{\Phi}^\top\bm{\Phi}+\lambda\bm{I})$.
This batched procedure incurs only a minor computational overhead, typically on the order of a few milliseconds.

\paragraph{From 1\mbox{-}DoF to multi\mbox{-}DoF.}
For a $d$-DoF action sequence $\va_{1:T}\in\mathbb{R}^{T\times d}$,
we fit each DoF independently using the same time grid $u_{1:T}$ and design matrix $\bm{\Phi}$:
\begin{equation}
\vc_j^\star \;=\; (\bm{\Phi}^\top\bm{\Phi} + \lambda \bm{I})^{-1}\bm{\Phi}^\top a^{(j)}_{1:T},
\qquad j=1,\dots,d.
\label{eq:per_dof}
\end{equation}
Stacking $\{\vc_j^\star\}_{j=1}^d$ yields the control\mbox{-}point matrix
\begin{equation}
\bm{C} \;=\;
\begin{bmatrix}
(\vc_1^\star)^\top\\[-0.15em]
\vdots\\[-0.15em]
(\vc_d^\star)^\top
\end{bmatrix}
\in \mathbb{R}^{d\times T_c},
\label{eq:Cstack}
\end{equation}
which serves as the fixed\mbox{-}length representation. Reconstruction at any $u$ follows from \eqref{eq:bspline_sum} using the corresponding row of $\bm{\Phi}$.

\paragraph{Choice of degree, knots, and control points.}
We use clamped uniform knot sequences, with spline degree set to $p{=}4$ for the smooth, high\mbox{-}fidelity representation of position and rotation trajectories, and $p{=}0$ for the near piecewise\mbox{-}constant representation of gripper signals.
The number of control points $T_c$ balances reconstruction accuracy against representation compactness. In practice, we choose $T_c\ll T$ to align variable horizons into a common fixed length while retaining millimeter\mbox{-}level reconstruction.

\subsection{Evaluation Metrics}
\label{app:metrics}

To comprehensively evaluate the \method{} framework and its impact on VLA training, we define a multi-faceted set of metrics. These metrics assess the model from the perspectives of physical fidelity, structural efficiency, and closed-loop performance.

\textbf{Mean Absolute Error (MAE):}
This metric quantifies the \textbf{physical reconstruction fidelity} between the ground-truth expert trajectories and the reconstructed actions.
\begin{equation}
    \text{MAE} = \frac{1}{T \cdot d}\sum_{t=1}^T \sum_{i=1}^d |a_{t,i} - \hat{a}_{t,i}|
\end{equation}
where $T$ is the trajectory horizon and $d$ is the action dimension. In robotic control, MAE serves as an open-loop proxy for the tokenizer's ability to preserve high-frequency motion details. A low MAE ensures that the discretization process does not filter out critical control nuances, such as precise gripper orientations or contact forces.

\textbf{Compression Ratio (CR):}
CR measures the \textbf{bottleneck efficiency} of the action representation, defined as the ratio between raw data size and the resulting token sequence length.
\begin{equation}
    \text{CR} = \frac{T \times d}{N_{\text{tokens}}}
\end{equation}
where $N_{\text{tokens}}$ is the number of discrete tokens used to represent a trajectory. For Transformer-based VLA models, CR is a critical determinant of computational overhead, as the attention complexity scales quadratically with sequence length. \method{} aims to achieve a high CR by leveraging Lie-algebraic priors to represent complex $SE(3)$ motions with minimal discrete intents.

\textbf{Codebook Utilization (CU):} 
CU evaluates the \textbf{statistical efficiency} of the vector quantization (VQ) process by measuring the proportion of active entries in the codebook $\mathcal{C}$ during validation.
\begin{equation}
    \text{CU} = \frac{1}{|\mathcal{C}|} \sum_{e \in \mathcal{C}} \mathbb{I} \left[ \text{count}(e) > 0 \right]
\end{equation}
A low CU often signifies "representation collapse" or the presence of "dead codes," typically caused by the anisotropic nature of the raw action space. In our framework, high CU values validate the effectiveness of \textit{covariance-aware whitening}, which reshapes the action clusters into an isotropic distribution that matches the Euclidean geometry of the VQ codebook.

\textbf{Success Rate (SR):} 
SR is the primary metric for \textbf{closed-loop policy performance}, representing the probability of a task being fully completed during deployment.
\begin{equation}
    \text{SR} = \frac{1}{N} \sum_{i=1}^N \TaskCompleted(i)
\end{equation}
Unlike open-loop metrics, SR evaluates the model's robustness to compounding errors and its ability to generalize to novel object poses and environmental perturbations. It reflects how effectively the "Topological Complexity Collapse" (Lemma~\ref{lemma:topological-collapse}) facilitates the mapping from semantic reasoning to precise physical execution.
